\documentclass{article}

\usepackage[preprint]{neurips_2025}
\makeatletter\renewcommand{\@notice}{}\makeatother
\AtBeginDocument{\newgeometry{textwidth=6.5in,textheight=9in,top=1in,headheight=12pt,headsep=25pt,footskip=30pt}}

\usepackage[utf8]{inputenc}
\usepackage[T1]{fontenc}
\usepackage{hyperref}
\usepackage{url}
\usepackage{booktabs}
\usepackage{amsfonts}
\usepackage{amsmath}
\usepackage{amsthm}
\usepackage{amssymb}
\usepackage{nicefrac}
\usepackage{microtype}
\usepackage{xcolor}
\usepackage{graphicx}
\usepackage{enumerate}
\usepackage{algorithm}
\usepackage{algorithmic}
\usepackage{multirow}
\usepackage{float}

\definecolor{placeholdercolor}{RGB}{180,0,0}

\newtheorem{theorem}{Theorem}[section]
   \newtheorem{proposition}[theorem]{Proposition}
   \newtheorem{lemma}[theorem]{Lemma}
   \newtheorem{corollary}[theorem]{Corollary}
    \theoremstyle{remark}
    \newtheorem*{remarkinner}{Remark}
    \newenvironment{remark}{\medskip\begin{remarkinner}}{\end{remarkinner}}

\title{LionVote: Per-Layer Learning Rate  \\
Adaptation for Lion}

\author{%
  Kris Atallah \\
  New York University \\
  New York, NY, USA \\
  \texttt{kris.a@nyu.edu}}

\begin{document}
\maketitle

\begin{abstract}
Per-layer diagnostics reveal that, at the prescribed learning rate,
Lion's effective scale is
$2.6$--$2.8{\times}$ too high for attention and MLP parameters and
${\sim}2{\times}$ too high for normalisation layers on
ViT-Tiny/CIFAR-100; this $32\%$ cross-layer-type disparity cannot be
reproduced by a single global rate.  The measurement comes from LionVote, a
per-layer learning rate mechanism in which each parameter tensor
maintains a compound level, a persistent integer updated every $c$~epochs by two
diagnostics (gradient direction stability and momentum health) resolved
by a validation loss tiebreaker.  Voting thresholds derive from
geometric identities, the EMA time constant, and a noise-floor
estimate; cadence is bounded structurally and selected by ablation.
On ViT-Tiny/CIFAR-100, LionVote achieves $69.7\%$ top-1 accuracy vs.\
Lion's $69.0\%$ ($p < 0.02$, Welch's $t$-test) and AdamW's $68.8\%$.
Per-layer adaptation value depends on both architectural heterogeneity
and task; on uniform CNN architectures tuned SGD with cosine annealing
remains dominant, and on ViT architectures gains are task-dependent.
\end{abstract}

\section{Introduction}
\label{sec:introduction}
Zhao et al.~\citep{zhao2025deconstructing} show that applying per-layer adaptive preconditioning only to the last layer and LayerNorm parameters recovers most of Adam's advantage over SGD on autoregressive language models; this is evidence that layer types have different optimisation characteristics. Their analysis does not prescribe how much each type should diverge from a base rate, nor propose a mechanism to determine this during training.

Existing adaptive methods operate per-coordinate or per-layer but statelessly, and schedule-free approaches~\citep{defazio2024schedulefree} maintain a single global rate. No existing method combines stateful layer-level adaptation with data-driven rate adjustment (\S\ref{sec:related:gap}).

LionVote is a per-layer learning rate mechanism for Lion~\citep{chen2023symbolic}. Lion's sign operation discards gradient magnitude, preventing directionally unstable layers from compensating through larger updates; this makes per-layer miscalibration more consequential for sign-based methods than for second-moment methods like Adam (\S\ref{sec:analysis:layers}). Each parameter tensor maintains a compound level (a persistent integer, \S\ref{sec:method:lr}) that modulates the base learning rate exponentially. Every $c$~epochs, two per-layer diagnostics (gradient direction stability and momentum health, with derived thresholds; Appendix~\ref{app:vote1}--\ref{app:vote2}) vote on whether to increase, decrease, or maintain each layer's rate. When the votes conflict, a validation loss tiebreaker resolves the decision. The mechanism adds one extra accumulation per parameter per batch; voting runs once every $c$ epochs. With all votes disabled, compound levels decay to zero and LionVote reduces to standard Lion.

We evaluate LionVote on WideResNet and ViT-Tiny across CIFAR-10 and CIFAR-100, with 8 seeds per configuration. The contributions are:

\begin{enumerate}
\item A per-layer adaptive mechanism for a sign-based optimizer whose voting thresholds derive from geometric identities and the EMA time constant; cadence and structural parameters (the exponent divisor $d{=}2$, maximum level $L{=}4$) are bounded rather than uniquely determined (Appendix~\ref{app:vote1}--\ref{app:cadence}).  The threshold derivation methodology is reusable for principled design of per-layer mechanisms in other optimizers.

\item A quantified finding about Lion: its effective scale is $2.6$--$2.8{\times}$ too high for attention and MLP parameters and ${\sim}2{\times}$ too high for normalisation parameters on ViT-Tiny/CIFAR-100, measured via compound level trajectories across 8 seeds. Normalisation layers receive $32\%$ higher effective scale than attention layers (\S\ref{sec:analysis:layers}). Because the compound multiplier scales both the sign update and decoupled weight decay, this measures joint LR+WD miscalibration (\S\ref{sec:analysis:layers}).

\item Evidence that per-layer adaptation value depends on both architectural heterogeneity and task. On ViT-Tiny, compound level spread between layer types is $2.8{\times}$ larger than on WideResNet, and LionVote at cadence~8 achieves the best accuracy on CIFAR-100 ($69.7\%$). The same architecture and comparable spread on CIFAR-10 yields a significant loss vs.\ Lion ($-0.88$~pp, $p < 0.001$). On WideResNet, tuned SGD with cosine annealing remains dominant (\S\ref{sec:analysis:arch}).
\end{enumerate}

\section{Related Work}
\label{sec:related}
Deep networks are not monolithic. Residual networks \citep{he2016resnet}, wide residual networks \citep{zagoruyko2016wideresnet}, and vision transformers \citep{dosovitskiy2021vit} differ qualitatively in how gradients flow and how representations evolve across depth.  Standard schedules such as cosine annealing \citep{loshchilov2017sgdr} and step decay apply a single learning rate trajectory to every parameter, regardless of each layer's convergence state.

\subsection{Per-Parameter Adaptation}
\label{sec:related:perparam}

Per-coordinate adaptation progressed from AdaGrad's \citep{duchi2011adagrad} accumulated squared gradients through Adam \citep{kingma2014adam} and decoupled weight decay \citep{loshchilov2019adamw}; all adapt per-coordinate, not per-layer.  Sign-based methods (signSGD \citep{bernstein2018signsgd}, Lion \citep{chen2023symbolic}) reduce the update to a sign of a momentum--gradient interpolation, matching Adam at lower memory cost under a global schedule.

Cautious Optimizers~\citep{liang2026cautious} mask update coordinates by sign agreement, operating per-coordinate per-batch without per-layer history.

\subsection{Per-Layer Adaptation}
\label{sec:related:perlayer}

Howard and Ruder \citep{howard2018ulmfit} show that assigning progressively smaller learning rates to lower layers outperforms a single global rate when adapting pretrained language models to new tasks. LARS \citep{you2017lars} introduced per-layer rate scaling by multiplying each layer's learning rate by the ratio of its weight norm to its gradient norm, working well in large-batch settings. LAMB \citep{you2020lamb} applies the same trust-ratio idea to Adam, enabling large-batch training where standard Adam degrades. LARS and LAMB are stateless: a layer that has been stable for many epochs is treated identically to one emerging from a noisy phase.

Gradient direction stability---whether a layer's gradient aligns consistently across epochs---is a complementary signal to the weight-to-gradient ratio. To our knowledge, using alignment between successive gradient estimates to modulate step size has not been applied at the layer level.

Muon~\citep{jordan2024muon} applies orthogonalised Newton--Schulz updates to hidden layers but hardcodes which layers receive which treatment.

Zhao et al.~\citep{zhao2025deconstructing} show that restricting per-layer adaptive preconditioning to the last layer and LayerNorm parameters suffices to capture most of the accuracy benefit on autoregressive language models, without quantifying per-type divergence or proposing a discovery mechanism.

Hao et al.~\citep{hao2025lanton} (LANTON) assign noise-adaptive per-layer learning rates to geometry-aware optimizers such as Muon, estimating gradient variance in the dual norm per layer at each step.  The adaptation is instantaneous: it does not accumulate cross-epoch evidence or maintain persistent per-layer state.

\subsection{Data-Driven Schedule Replacement}
\label{sec:related:auto}

AutoDrop \citep{wang2024autodrop} monitors angular velocity to trigger rate reductions automatically. D-Adaptation \citep{defazio2023dadaptation} derives the learning rate from an online lower bound on the distance to a minimizer. Orvieto and Xiao \citep{orvieto2024ngn} exploit a Gauss-Newton reformulation to derive stepsizes that warm up, peak, and decay without a schedule. Defazio et al.\ \citep{defazio2024schedulefree} replace momentum with iterate interpolation and averaging, matching hand-tuned cosine schedules with no stopping time.

All of these methods maintain a single global rate for the entire parameter vector.

\subsection{The Gap}
\label{sec:related:gap}

Schedule-free methods establish that data-driven adaptation can replace hand-designed global schedules; layer-type-aware methods establish that layers benefit from heterogeneous treatment. No existing method bridges the two. LARS and LAMB adapt per-layer but statelessly (\S\ref{sec:related:perlayer}). Schedule-free and D-Adaptation methods accumulate history but apply a single rate globally. Muon differentiates layer types but hardcodes which layers receive which treatment. LANTON adapts per-layer rates instantaneously (\S\ref{sec:related:perlayer}) but does not accumulate cross-epoch evidence. Meta-learned optimizers~\citep{andrychowicz2016learning} can in principle discover per-layer policies, but sacrifice interpretability and require expensive meta-training on proxy tasks.

\section{Proposed Approach}
\label{sec:method}
LionVote is a per-layer learning rate mechanism for the Lion optimizer
\citep{chen2023symbolic}, implemented as a PyTorch \texttt{Optimizer}
subclass.  It makes no structural modification to Lion's update rule.
Each parameter tensor receives an effective learning rate
$\alpha_i$ adjusted by a persistent integer \emph{compound level} that
is updated every $c$ epochs through a two-vote system with a global
tiebreaker.

\subsection{Per-Layer Learning Rate}
\label{sec:method:lr}

Each parameter $i$ has a compound level
$s_i \in \{-L, \ldots, 0, \ldots, +L\}$ (default $L = 4$) that
modulates the base learning rate:
\begin{equation}
\label{eq:alpha}
  \alpha_i \;=\; \mathrm{lr} \cdot \exp\!\bigl(s_i \cdot \beta_1 / 2\bigr),
\end{equation}
where $\mathrm{lr}$ is the global base learning rate (which may be
updated by an external scheduler) and $\beta_1$ is Lion's
sign-interpolation coefficient.  All compound levels are initialised
to zero.  At the default $\beta_1 = 0.9$, each unit step in $s_i$
corresponds to a ${\times}1.57$ multiplier, and the full range
$[-4, +4]$ spans from ${\times}0.165$ to ${\times}6.05$ relative to
the base rate.  The derivation of the exponent $\beta_1/2$ and the
bound $L = 4$ are given in Appendix~\ref{app:exponent}
and~\ref{app:maxlevel}.

\subsection{Batch Update}
\label{sec:method:step}

On every training batch, LionVote performs the standard Lion update
with $\alpha_i$ in place of the global learning rate: decoupled weight
decay, followed by a sign update
$\operatorname{sign}(\beta_1 m_i + (1 - \beta_1) g_i)$, followed by a
$\beta_2$-EMA momentum update (Algorithm~\ref{alg:lionvote}).  The raw gradient is accumulated for
epoch-level voting.

\subsection{Epoch Step and Voting}
\label{sec:method:epoch}

At the end of every epoch, LionVote computes the epoch-mean gradient
$\bar{g}_i = a_i / N$ for each parameter, resets the accumulators,
and recomputes $\alpha_i$ from the current $\mathrm{lr}$ so that
per-layer rates track any external scheduler.  On
\emph{voting epochs}---epochs where
$\mathrm{epoch} \bmod c = 0$, with $c$ the voting cadence---the
system additionally evaluates two per-layer diagnostics and one global
tiebreaker.

\paragraph{Vote 1: Gradient Direction Stability.}
\[
  \mathrm{alignment}_i \;=\;
    \frac{\bar{g}_i^{(\mathrm{curr})} \cdot \bar{g}_i^{(\mathrm{prev})}}
         {\|\bar{g}_i^{(\mathrm{curr})}\| \,
          \|\bar{g}_i^{(\mathrm{prev})}\| + \epsilon}.
\]
The vote is $+1$ if $\mathrm{alignment}_i > 0.5$, $-1$ if
$\mathrm{alignment}_i < 0$, and $0$ otherwise ($\epsilon = 10^{-8}$
throughout).  The lower threshold
follows directly from the definition of cosine (negative iff angle
exceeds $\pi/2$); the upper threshold corresponds to a two-thirds
supermajority of coordinate sign pairs agreeing under coordinate
isotropy (Appendix~\ref{app:vote1}).  Vote~1 uses epoch-mean gradients rather
than momentum to avoid self-contamination from the EMA.

\paragraph{Vote 2: Momentum Health.}
\[
  r_i \;=\; \frac{\|m_i\|}{\|\bar{g}_i^{(\mathrm{curr})}\| + \epsilon}.
\]
The vote is $-1$ if $r_i > e$ (momentum norm exceeds gradient norm by
more than a factor of $e$), $+1$ if $r_i < 1/e$ (gradient norm
exceeds momentum norm by more than a factor of $e$), and $0$
otherwise.  Both thresholds derive from the EMA's intrinsic time
constant: $\beta_2^{\tau} = e^{-1}$ for any $\beta_2 \in (0,1)$
(Appendix~\ref{app:vote2}).

\paragraph{Tiebreaker: Validation Loss.}
The tiebreaker is a global scalar that fires only when the two local
votes sum to zero (both abstain or conflict):
\[
  \delta \;=\;
    \frac{\bar{L}_{\mathrm{prev}} - \bar{L}_{\mathrm{curr}}}
         {|\bar{L}_{\mathrm{prev}}| + \epsilon}.
\]
It returns $+1$ if $\delta > 0.01$, $-1$ if $\delta < -0.01$, and $0$
otherwise ($\epsilon = 10^{-8}$ as in Votes~1 and~2).  The $1\%$
threshold matches the noise floor of cross-entropy loss estimation for
typical validation set sizes (Appendix~\ref{app:tiebreaker}).
Under Lion on ViT-Tiny/CIFAR-100, the tiebreaker condition is
satisfied for ${\sim}59\%$ of per-parameter voting decisions
(Appendix~\ref{app:vote_dynamics}), so the tiebreaker is not a
tie-breaking edge case but the dominant signal path;
Vote~1 or Vote~2 alone determine the outcome for the remaining
${\sim}41\%$.

\paragraph{Vote Resolution.}
Vote~1 monitors directional stability; Vote~2 monitors the
momentum-gradient magnitude ratio.  The final vote is resolved
per the table in Algorithm~\ref{alg:lionvote}.

\subsection{Compound Level Update}
\label{sec:method:update}

The compound level $s_i$ is updated via the \emph{asymmetric} rule:
vote $= +1$ resets $s_i$ to $0$ if negative, else increments
(capped at $+L$); vote $= -1$ resets to $0$ if positive, else
decrements (capped at $-L$); vote $= 0$ decays toward zero.
A single opposing vote revokes the accumulated level entirely.

\subsection{Algorithm}
\label{sec:method:algorithm}

\begin{figure}[t]
\small
\begin{minipage}[t]{0.62\textwidth}
\begin{algorithm}[H]
\caption{LionVote}
\label{alg:lionvote}
\begin{algorithmic}[1]
\REQUIRE $\theta$, lr, $\beta_1, \beta_2, \lambda$, cadence $c$, max level $L$
\STATE Init $m_i, a_i, s_i \leftarrow 0$;
  $\alpha_i \leftarrow \mathrm{lr}$; $N \leftarrow 0$
\FOR{epoch $= 1, 2, \ldots$}
  \FOR{each batch with gradient $g_i$}
    \STATE $\theta_i \leftarrow \theta_i
      - \alpha_i \lambda \theta_i$
      \hfill $\triangleright$ decay
    \STATE $\theta_i \leftarrow \theta_i
      - \alpha_i \operatorname{sign}(\beta_1 m_i
        + (1{-}\beta_1)g_i)$
      \hfill $\triangleright$ Lion
    \STATE $m_i \leftarrow \beta_2 m_i + (1{-}\beta_2)g_i$
    \STATE $a_i \leftarrow a_i + g_i$; $N \leftarrow N{+}1$
  \ENDFOR
  \STATE $\bar{g}_i \leftarrow a_i / N$
  \IF{ep $\bmod c = 0$ \AND $\bar{g}^{(\mathrm{prev})}$ exists}
    \STATE Tiebreaker from $\bar{L}_{\mathrm{curr}}, \bar{L}_{\mathrm{prev}}$
    \FOR{each parameter $i$}
      \STATE $v_1 \leftarrow$ Vote~1; $v_2 \leftarrow$ Vote~2
      \STATE Resolve $\rightarrow v$; update $s_i$
    \ENDFOR
  \ENDIF
  \IF{ep $\bmod c = 0$ \OR ep $= 1$}
    \STATE Snapshot $\bar{g}_i^{(\mathrm{prev})}, \bar{L}_{\mathrm{prev}}$
  \ENDIF
  \STATE $\alpha_i \leftarrow \mathrm{lr} \cdot
    \exp(s_i \cdot \beta_1/2)$
  \STATE $a_i, N \leftarrow 0$
\ENDFOR
\end{algorithmic}
\end{algorithm}
\end{minipage}\hfill
\begin{minipage}[t]{0.35\textwidth}
\vspace{1.8em}
\centering
\small
\textbf{Vote Resolution}\\[0.5em]
\begin{tabular}{ccc}
\toprule
V1 & V2 & Result \\
\midrule
$\pm 1$ & same & agree \\
$\pm 1$ & $0$ & V1 \\
$0$ & $\pm 1$ & V2 \\
\multicolumn{2}{c}{else} & tiebreaker \\
\bottomrule
\end{tabular}
\end{minipage}
\end{figure}

Computational cost is detailed in
Appendix~\ref{app:exp:computational_cost}.

\section{Experiments}
\label{sec:experiments}
We evaluate LionVote against AdamW, SGD with Nesterov momentum, and
Lion on four model--dataset configurations.  All experiments use 8
random seeds.  We report top-1 validation accuracy (mean $\pm$ std)
and epochs to reach intermediate accuracy thresholds.

\subsection{Experimental Setup}
\label{sec:experiments:setup}

\paragraph{Models and datasets.}
WideResNet-28-10~\citep{zagoruyko2016wideresnet} on CIFAR-10
(${\sim}36.5$M parameters, 200 epochs, batch size 128) and
WideResNet-40-10 on CIFAR-100 (${\sim}55.8$M parameters, 200 epochs,
batch size 128).  ViT-Tiny~\citep{dosovitskiy2021vit} (patch size~4,
embedding dimension~192, 12 blocks, 3 heads, ${\sim}5.7$M parameters)
on CIFAR-10 and CIFAR-100 (300 epochs, batch size 128).
WideResNet uses pre-activation blocks with BatchNorm and dropout~0.3.
ViT-Tiny uses learnable positional embeddings, stochastic depth
(linearly increasing to 0.1), and a CLS token classification head.

\paragraph{Augmentation.}
CNN configurations: RandomCrop, HorizontalFlip, and AutoAugment
(CIFAR-10 policy) for CIFAR-100.  ViT configurations: RandAugment$(2, 9)$,
RandomErasing$(p{=}0.25)$, Mixup$(\alpha{=}0.8)$/CutMix$(\alpha{=}1.0)$
(50/50 per batch), and label smoothing~0.1 with soft cross-entropy.

\paragraph{Optimizers.}
AdamW (lr${=}10^{-3}$), SGD (lr${=}0.1$, momentum 0.9, Nesterov),
Lion (lr${=}10^{-4}$, $\beta_1{=}0.9$, $\beta_2{=}0.99$), and
LionVote (same as Lion, plus voting).  Weight decay follows the Lion
paper~\citep{chen2023symbolic}: Lion and LionVote use $3$--$10{\times}$
larger WD than AdamW (e.g.\ $5{\times}10^{-1}$ vs.\ $5{\times}10^{-2}$
for ViT/CIFAR-10).  Full weight decay values per configuration are in
Appendix~\ref{app:exp:hparams}.  LARS and LAMB are omitted: both are
stateless norm-ratio methods designed for large-batch distributed
training (\S\ref{sec:related:perlayer}), a different regime from the
single-GPU setting studied here.

\paragraph{Schedulers.}
WRN configurations: cosine annealing.  ViT configurations: 5-epoch
linear warmup followed by cosine annealing.  LionVote is additionally
tested without any schedule.

\paragraph{Voting hyperparameters.}
Cadence $c \in \{4, 6, 8\}$, max level $L{=}4$, asymmetric update
rule.  All voting thresholds are derived
(Appendix~\ref{app:vote1}--\ref{app:tiebreaker}); cadence is selected by ablation within a structurally bounded range
(Appendix~\ref{app:cadence}).

\subsection{Main Results}
\label{sec:experiments:main}

\begin{table}[t]
\centering
\caption{Best top-1 validation accuracy (\%, mean $\pm$ std, 8 seeds).
WRN configurations use cosine scheduling; ViT configurations use cosine
with 5-epoch warmup.  Best result per row in \textbf{bold}.}
\label{tab:main}
\small
\begin{tabular}{l c c c c c}
\toprule
Configuration & SGD & AdamW & Lion & LionVote (c4) & LionVote (c8) \\
\midrule
WRN-28-10 / C10  & \textbf{96.08}{\scriptsize$\pm$0.15}
  & 93.80{\scriptsize$\pm$0.69}
  & 93.71{\scriptsize$\pm$0.40}
  & 93.14{\scriptsize$\pm$0.38}
  & 93.30{\scriptsize$\pm$0.82} \\
WRN-40-10 / C100 & \textbf{82.76}{\scriptsize$\pm$0.37}
  & 77.45{\scriptsize$\pm$0.44}
  & 76.53{\scriptsize$\pm$0.63}
  & 76.81{\scriptsize$\pm$1.26}
  & 75.81{\scriptsize$\pm$1.19} \\
ViT-Tiny / C10   & 69.20{\scriptsize$\pm$1.54}
  & 91.52{\scriptsize$\pm$0.21}
  & \textbf{92.40}{\scriptsize$\pm$0.10}
  & 90.80{\scriptsize$\pm$0.44}
  & 91.52{\scriptsize$\pm$0.18} \\
ViT-Tiny / C100  & 36.97{\scriptsize$\pm$5.86}
  & 68.75{\scriptsize$\pm$0.39}
  & 68.95{\scriptsize$\pm$0.41}
  & 68.90{\scriptsize$\pm$0.48}
  & \textbf{69.71}{\scriptsize$\pm$0.60} \\
\bottomrule
\end{tabular}
\end{table}

SGD with cosine annealing dominates the WRN configurations
(Table~\ref{tab:main}),
outperforming the next-best method by $2.3$~pp on CIFAR-10 and
$5.3$~pp on CIFAR-100, consistent with the established difficulty of beating tuned SGD
on uniform CNN architectures~\citep{schmidt2021crowded}.  On ViTs, Lion
outperforms both SGD and AdamW.  LionVote at
cadence~8 achieves the best result on ViT-Tiny/CIFAR-100 ($69.71\%$)
and ties AdamW on ViT-Tiny/CIFAR-10 ($91.52\%$).  The
ViT-Tiny/CIFAR-100 improvement over Lion is statistically significant
(Welch's $t$-test, $p = 0.017$, 8 seeds); cadence is selected by
ablation from $c \in \{4, 6, 8\}$
(\S\ref{sec:analysis:limitations}).  On ViT-Tiny/CIFAR-10,
Lion's advantage over LionVote at cadence~8 ($92.40\%$ vs.\
$91.52\%$) is also significant ($p < 0.001$).  At cadence~8, the
differences between Lion and LionVote are not significant on either
WRN configuration ($p > 0.15$).  At cadence~4, LionVote
significantly underperforms Lion on WRN-28-10/CIFAR-10 ($-0.57$~pp,
$p = 0.011$).

\begin{figure}[t]
\centering
\includegraphics[width=0.92\textwidth]{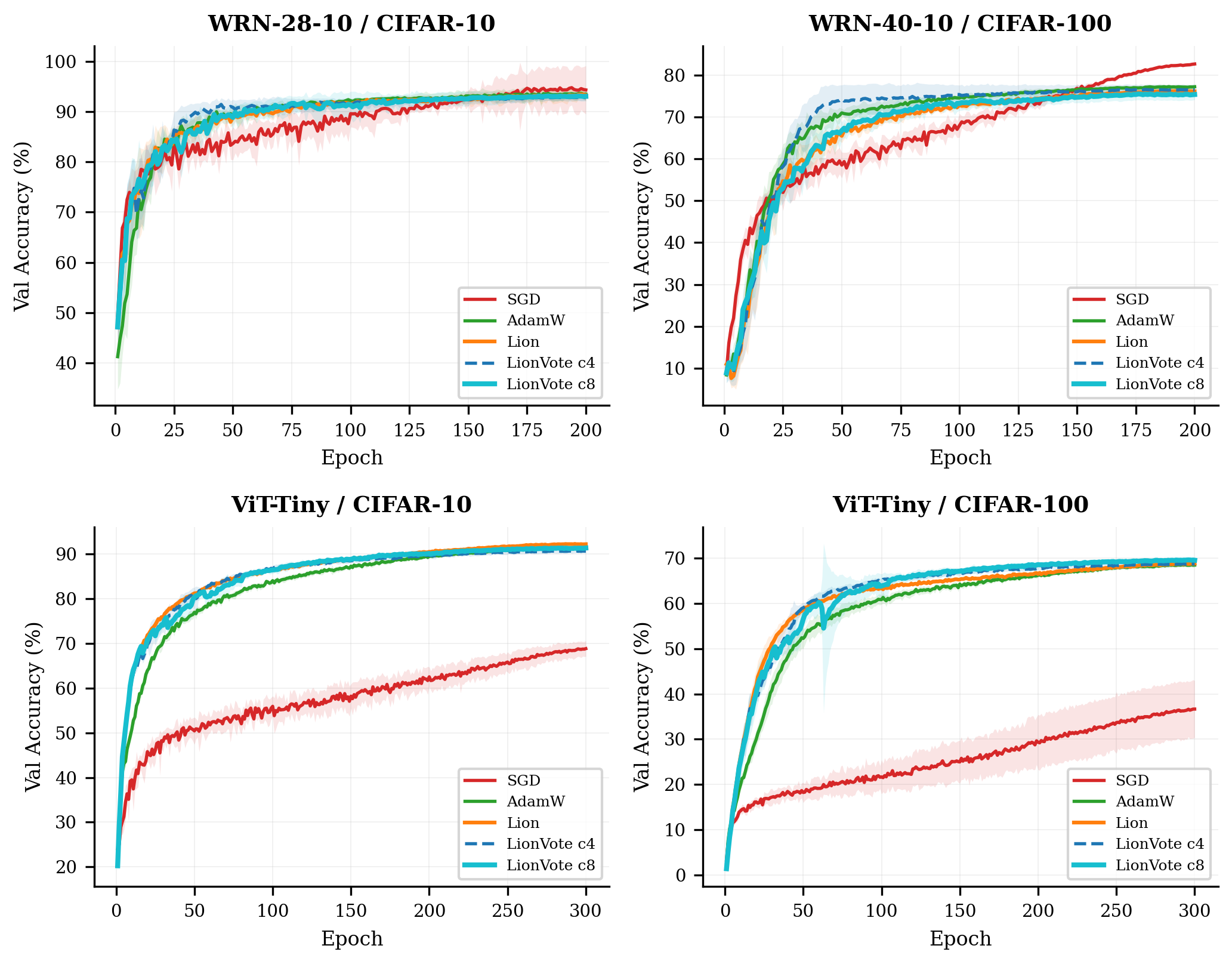}
\caption{Validation accuracy vs.\ epoch for all four configurations
(mean $\pm 1$ std over 8 seeds).  Panels (left to right, top to
bottom): WRN-28-10/C10, WRN-40-10/C100, ViT-Tiny/C10,
ViT-Tiny/C100.  LionVote at cadence~8 converges fastest on
ViT-Tiny/CIFAR-100, reaching the $68\%$ threshold ${\sim}80$ epochs
before Lion.  On WRN configurations (top row), SGD with cosine
annealing achieves the highest final accuracy despite slower initial
convergence.}
\label{fig:convergence}
\end{figure}

LionVote at cadence~8 reaches the $68\%$ threshold ${\sim}80$ epochs
before Lion on ViT-Tiny/CIFAR-100 (Figure~\ref{fig:convergence};
full convergence thresholds in Appendix, Table~\ref{tab:convergence}).
Validation loss curves and generalisation gap analysis are in
Appendix~\ref{app:exp:Vloss_curves} and~\ref{app:exp:gen_gap}.

\subsection{Ablation Study}
\label{sec:experiments:ablation}

\begin{table}[t]
\centering
\caption{Ablation on WRN-28-10/CIFAR-10 and ViT-Tiny/CIFAR-100
(\%, mean $\pm$ std, 8 seeds).  Lion (no voting) included for reference.
Best numerical result per column in \textbf{bold};
the asymmetric update rule (c8) is the default configuration.
Cadence and max-level sensitivity in Appendix~\ref{app:exp:cadence}.}
\label{tab:ablation}
\small
\begin{tabular}{l l c c}
\toprule
Variant & Description & WRN/C10 & ViT/C100 \\
\midrule
Lion        & no voting & 93.71{\scriptsize$\pm$0.40} & 68.95{\scriptsize$\pm$0.41} \\
\midrule
LionVote (c4) & full system      & 93.14{\scriptsize$\pm$0.38} & 68.90{\scriptsize$\pm$0.48} \\
LionVote (c8) & full, cadence 8  & 93.30{\scriptsize$\pm$0.82} & 69.71{\scriptsize$\pm$0.60} \\
v2only      & Vote 2 only        & 93.46{\scriptsize$\pm$0.68} & 68.93{\scriptsize$\pm$0.41} \\
v1only      & Vote 1 only        & 92.22{\scriptsize$\pm$0.64} & 68.57{\scriptsize$\pm$0.49} \\
notie       & no tiebreaker      & 92.55{\scriptsize$\pm$0.34} & 68.64{\scriptsize$\pm$0.52} \\
symmetric   & symmetric update   & 92.88{\scriptsize$\pm$0.96} & \textbf{70.54}{\scriptsize$\pm$0.94} \\
\midrule
LionVote+none (c4)  & no schedule & 93.02{\scriptsize$\pm$0.52} & 67.89{\scriptsize$\pm$0.70} \\
LionVote+none (c8)  & no schedule & \textbf{93.48}{\scriptsize$\pm$0.39} & 68.89{\scriptsize$\pm$0.68} \\
\bottomrule
\end{tabular}
\end{table}

\emph{Vote~2 monitors momentum health and finds it stable.}
The v2only variant matches Lion on both configurations ($93.46\%$
WRN/C10, $68.93\%$ ViT/C100); the momentum-to-gradient ratio rarely
exits the $[1/e, e]$ dead zone (Appendix~\ref{app:exp:v2only}).

\emph{The dominant component is configuration-dependent.}  On WRN/C10,
the v1only variant (which disables both Vote~2 and the tiebreaker)
degrades to $92.22\%$, a $0.92$~pp gap from the full system
($93.14\%$).  Of this gap, $0.59$~pp is attributable to the tiebreaker
(Welch $p = 0.006$); the remaining $0.33$~pp attributed to Vote~2 is
not significant ($p = 0.23$), consistent with its near-zero firing
rate.  On ViT/C100, the pattern reverses: v1only achieves $68.57\%
\pm 0.49$, \emph{below} Lion ($-0.38$~pp, $p = 0.12$); notie
achieves $68.64\% \pm 0.52$ ($-0.31$~pp vs.\ Lion, $p = 0.22$),
while the full system at cadence~8 reaches $69.71\%$ ($+0.76$~pp,
$p = 0.017$).  The entire gain on ViT/C100 is attributable to the
tiebreaker: LionVote~c8 exceeds notie by $+1.07$~pp ($p = 0.003$).
Vote~1 alone and Vote~1+2 without the tiebreaker both fall below Lion
(neither difference significant).  LionVote thus combines per-layer
gradient diagnostics with a global validation loss signal: Vote~1
provides differentiated per-layer votes (negative $56\%$ of the time
for attention vs.\ $30\%$ for normalisation), while the tiebreaker
applies the same direction to all undecided layers whenever Vote~1
abstains (approximately $43\%$ of voting epochs for attention/MLP
and $69\%$ for normalisation, estimated from standard Lion gradient
statistics; Appendix~\ref{app:exp:raw_diagnostics}).
The per-layer mechanism creates divergent compound level trajectories
(Figure~\ref{fig:compound}), but the global tiebreaker signal is
necessary to convert this differentiation into an accuracy gain on
ViT/C100.

The voting mechanism works with either update rule.  On the two
configurations tested with symmetric updates, the difference from
asymmetric is not significant: $-0.26$~pp on WRN/C10 ($p = 0.49$)
and $+0.83$~pp on ViT/C100 ($p = 0.060$).  The symmetric variant
achieves the highest single-configuration accuracy in the study:
$70.54\% \pm 0.94$ on ViT/C100, significantly above Lion
($+1.59$~pp, $p = 0.002$), though with higher cross-seed variance
than the asymmetric default (std $0.94$ vs.\ $0.60$).  The choice
of update rule is a design decision, not a derivation
(\S\ref{sec:analysis:limitations}).

\section{Analysis}
\label{sec:analysis}
\subsection{Layer-Type Differentiation}
\label{sec:analysis:layers}

\begin{table}[t]
\centering
\caption{Mean compound level by layer type, ViT-Tiny/CIFAR-100,
LionVote + cosine warmup, cadence~4 (8 seeds, 12 blocks).  The
effective LR multiplier at epoch~300 is $\exp(s \cdot
0.9/2)$; SD$_{\mathrm{seed}}$ is the standard deviation of the
multiplier across 8 seeds (embed+patch and head+final\_norm each
contain too few tensors per seed for meaningful per-type SDs).}
\label{tab:compound_vit}
\small
\begin{tabular}{l r r r r r r r r}
\toprule
Layer type & ep4 & ep64 & ep124 & ep184 & ep244 & ep300 & Mult.\ & SD$_{\mathrm{seed}}$ \\
\midrule
all\_attn       & $-0.50$ & $-1.36$ & $-2.12$ & $-2.55$ & $-2.49$ & $-2.15$ & $0.38\times$ & $\pm 0.59$ \\
all\_mlp        & $-0.52$ & $-0.99$ & $-1.73$ & $-2.28$ & $-2.34$ & $-2.29$ & $0.36\times$ & $\pm 0.69$ \\
all\_norm       & $-0.57$ & $-0.80$ & $-0.86$ & $-1.06$ & $-1.31$ & $-1.54$ & $0.50\times$ & $\pm 0.26$ \\
embed+patch     & $-0.41$ & $-1.47$ & $-1.88$ & $-2.09$ & $-2.16$ & $-2.03$ & $0.40\times$ & --- \\
head+final\_norm & $-0.25$ & $-1.06$ & $-1.06$ & $-1.38$ & $-0.94$ & $-1.56$ & $0.50\times$ & --- \\
\bottomrule
\end{tabular}
\end{table}

By epoch~300 (Table~\ref{tab:compound_vit}), Lion's effective scale is
$2.6{\times}$ too high for attention parameters, $2.8{\times}$ too high
for MLP, and $2.0{\times}$ too high for normalisation.  Normalisation
layers receive $32\%$ higher effective scale than attention layers
($0.50 / 0.38 = 1.32$); this ratio is consistent at cadence~8
($1.33$; Appendix~\ref{app:exp:compound}).  Because the compound
multiplier $\alpha_i$ scales both the sign update and decoupled weight
decay, the absolute reductions measure joint LR+WD miscalibration,
and a global rate reduction cannot reproduce the per-layer
differentiation regardless of the LR/WD decomposition.

\begin{figure}[t]
\centering
\includegraphics[width=0.9\textwidth]{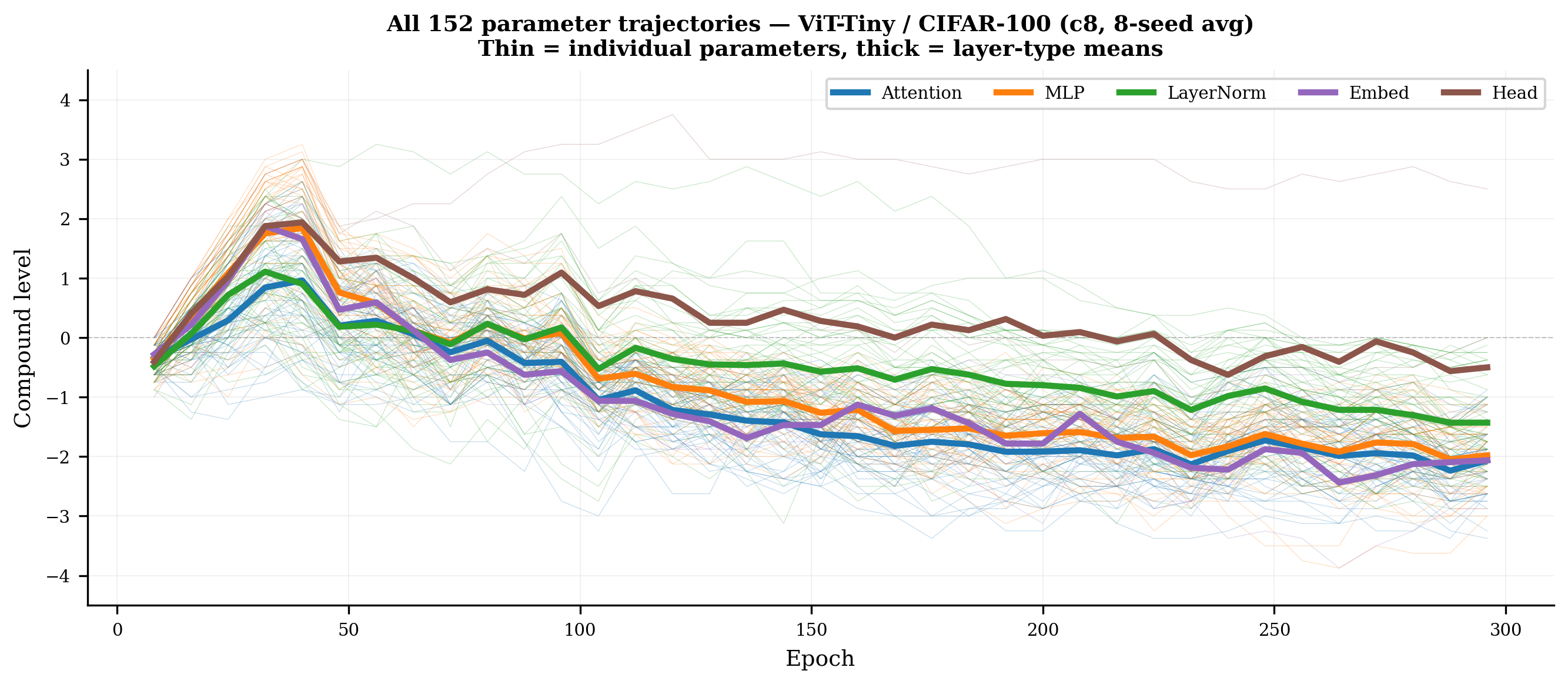}
\caption{Compound level trajectories for all 152 parameters of
ViT-Tiny on CIFAR-100 (cadence~8, 8-seed average).  Thin lines show
individual parameters; thick lines show layer-type means.  All layer
types drift negative over training, but normalisation layers (green)
and the classification head (brown) are penalised less than attention
(blue) and MLP (orange) parameters, reflecting a ${\sim}33\%$
difference in effective scale by epoch~296.  Within-type spread
(${\sim}4$ levels) substantially exceeds between-type mean differences
($0.6$ levels).}
\label{fig:compound}
\end{figure}

All levels are negative (Figure~\ref{fig:compound}), but the
$32\%$ effective-scale disparity between normalisation and attention
layers, consistent across cadence~4 (ratio $1.32$) and cadence~8
(ratio $1.33$; Appendix~\ref{app:exp:compound}), confirms that a
uniform rate reduction cannot substitute for per-layer
differentiation.

The compound trajectories quantify the disparity that
Zhao et~al.~\citep{zhao2025deconstructing} identify qualitatively
on autoregressive language models
(\S\ref{sec:related:perlayer}): the layers they flag as needing
distinct treatment (normalisation and the classification head) are
the same layers that the voting system assigns the highest effective
rates.

The differential treatment has a structural basis.  LayerNorm
parameters are elementwise scalars whose gradient distributions differ
qualitatively from the large projection matrices in attention and MLP
layers.  The voting system detects this through gradient alignment
statistics (Vote~1) without requiring architectural knowledge;
standard Lion training without voting produces the same layer-type
ordering (Appendix~\ref{app:exp:raw_diagnostics}).
Cross-optimizer diagnostics confirm the pattern: the
attn~$<$~norm ordering holds under both Lion and AdamW, but Lion's
$\operatorname{sign}(\cdot)$ amplifies the gap ($1.88{\times}$ larger
norm--attn cosine difference; $0.109$ vs.\ $0.058$).  AdamW's second
moment smooths layer-type differences; Lion's sign operation preserves
raw gradient structure, making per-layer adaptation more consequential
(Appendix~\ref{app:exp:raw_diagnostics}).

Proposition~\ref{prop:cosh} predicts the observed operating regime:
the convergence bound degradation at compound level~$s$ grows as
$\cosh(s \cdot \beta_1/2)$, reaching $+43\%$ at level~$-2$ where
most layer types settle, and $+106\%$ at level~$-3$ which appears
only transiently (Appendix~\ref{app:exponent}).

\paragraph{What transfers beyond Lion.}
Running LionVote's diagnostics against a paired AdamW trajectory
on the same workload
(Appendix~\ref{app:exp:raw_diagnostics}) separates the
optimizer-agnostic and Lion-specific components of the mechanism.
Vote~1's input signal (cosine of epoch-mean gradients) and firing
rates are near-invariant between Lion and AdamW
($V_1 = -1$~rate $37.6\%$ vs.\ $35.4\%$); its upper threshold's
derivation uses Lion's sign structure, but the $2/3$ supermajority
convention is optimizer-agnostic.  Vote~2's thresholds, derived
from Lion's $\beta_2$-EMA time constant, misfire under AdamW's
different momentum mechanics ($V_2 = -1$ fires on $99.98\%$ of
AdamW observations vs.\ $2.43\%$ under Lion); these thresholds are
Lion-specific and would need re-derivation for transfer.  The
tiebreaker, compound-level state machine, and cadence/max-level
hyperparameters are optimizer-agnostic.  A port to another
momentum-based optimizer would preserve the voting architecture,
recalibrate Vote~2 against that optimizer's effective momentum
time constant, and re-examine the LR exponent.  A full breakdown
is in Table~\ref{tab:portability}.

\subsection{Architecture Dependence}
\label{sec:analysis:arch}

On WRN-28-10/CIFAR-10, the three residual groups (layer1, layer2,
layer3) settle to compound levels of $-1.68$, $-1.62$, and $-1.84$ at
epoch~200, a spread of $0.22$ levels between the most and least
reduced groups (Appendix~\ref{app:exp:compound}).  On ViT-Tiny, the
spread between attention ($-2.15$) and normalisation ($-1.54$) is
$0.61$ levels, $2.8{\times}$ larger.

The difference reflects architectural heterogeneity.  WRN groups are
structurally uniform: each is a sequence of conv+BN blocks with the
same connectivity pattern at different spatial resolutions.  ViT
blocks contain three qualitatively different layer types, each with
distinct gradient dynamics.  The voting system finds more to
differentiate in the ViT.

However, heterogeneity is necessary but not sufficient.  At cadence~8,
the ViT attn-norm spread is $0.64$ levels on CIFAR-10 and $0.63$
levels on CIFAR-100 (nearly identical), yet LionVote significantly
helps on CIFAR-100 ($+0.76$~pp, $p = 0.012$) and significantly hurts
on CIFAR-10 ($-0.88$~pp, $p < 0.001$), where all cadences degrade
performance.  What distinguishes CIFAR-100 from CIFAR-10 is not
architectural spread but a task-level factor; two tasks do not
suffice to identify it.  Additional architectures and tasks are
needed to map the boundary of per-layer adaptation value.

LionVote without a schedule matches its scheduled variant on WRN/C10
but falls short on ViT/C100
(Appendix~\ref{app:exp:schedule_replacement}).

\subsection{Limitations}
\label{sec:analysis:limitations}

Experiments are limited to CIFAR-10 and CIFAR-100 with WideResNet and
ViT-Tiny.  The scope is deliberate: every configuration is run with
8~random seeds to eliminate variance-driven conclusions, a level of
replication that is incompatible with large-scale experiments under
limited compute.  Whether the layer-type differentiation observed
here (attention/MLP penalised relative to normalisation) holds at
larger scale and on different architectures is an open question.
\paragraph{Testing hierarchy.}
Cadence is selected by ablation from $c \in \{4, 6, 8\}$, making the
LionVote~c8 vs.\ Lion comparison partially post-hoc.  Four control
experiments on ViT-Tiny/CIFAR-100 (8 seeds each;
Appendix~\ref{app:exp:controls}) test the per-layer adaptation claim.
Lion at a uniformly lower LR ($4{\times}10^{-5}$, matching LionVote's
average effective rate) is indistinguishable from standard Lion
($p = 0.30$) and significantly below LionVote~c8 ($p = 0.003$,
survives Bonferroni correction).  Static per-layer-type multipliers
(epoch-300 values from Table~\ref{tab:compound_vit}) perform worse than
standard Lion ($p = 0.024$), confirming that the non-monotonic trajectory
cannot be replicated by static scheduling.  A weight decay sweep
($\mathrm{wd} \in \{0.25, 0.5, 0.75\}$) does not close the gap
($p \geq 0.075$).  The $c{=}8$ vs.\ Lion comparison ($p = 0.017$) is
marginal after Holm--Bonferroni correction; the primary evidence rests on
the control comparison.  The study is underpowered below Cohen's
$d \approx 0.8$.

\paragraph{Practitioner guidance.}
LionVote at cadence~8 is most likely to help on architectures with
qualitatively distinct layer types (ViT-style).  On uniform CNNs or
tasks where Lion is near its ceiling, it provides no consistent advantage.

\paragraph{Additional limitations.}
The tiebreaker uses validation loss during training (${\sim}37$ decisions
at cadence~8); future work should confirm on a held-out split.  Vote~2
rarely fires at Lion's recommended rate
(\S\ref{sec:experiments:ablation}).  LionVote exhibits higher cross-seed
variance than Lion (e.g.\ std $0.82$ vs.\ $0.40$ on WRN-28-10/C10).
The symmetric variant ($70.54\%$ on ViT/C100, $p = 0.002$) was tested on
only 2 of 4 configurations.

\section{Conclusion}
\label{sec:conclusion}
LionVote is a per-layer learning rate mechanism for Lion whose
voting thresholds are derived from mathematical structure rather than
tuned.  Three findings are useful beyond the specific method.
 
First, the threshold derivation methodology
(Appendix~\ref{app:vote1}--\ref{app:cadence})---Sheppard's
sign-agreement formula, the EMA time constant, and the descent lemma
for sign-based updates---provides reusable building blocks for per-layer
mechanisms in other optimizers.
 
Second, Lion's effective scale is systematically miscalibrated across
layer types on ViT-Tiny, with a $32\%$ disparity between
normalisation and attention layers (\S\ref{sec:analysis:layers}).
Because the compound multiplier scales both the sign update and
decoupled weight decay, this is a joint LR+WD finding.  A weight decay
sweep (WD${=}0.25$, WD${=}0.75$; \S\ref{sec:analysis:limitations})
confirms that neither WD adjustment closes the LionVote gap, ruling out
simple global WD miscalibration as the explanation and supporting the
per-layer adaptation claim.  This complements
\citet{zhao2025deconstructing} by quantifying the disparity under Lion
on vision architectures.  Future work on Lion should consider
layer-type-aware scaling even without the full voting mechanism.
The voting mechanism is not tied to the asymmetric update rule: a
symmetric variant achieves $70.54\%$ on ViT/C100 ($p = 0.002$ vs.\
Lion, tested on 2 of 4 configurations), the study's best single
result.

Third, per-layer adaptation provides more value as architectural
heterogeneity increases.  The compound level spread between layer
types is $2.8{\times}$ larger on ViT-Tiny than on WideResNet,
and the accuracy benefit tracks this spread.  This suggests that
architectures with diverse layer types are the natural setting for
per-layer methods.
 
Open questions include whether the voting mechanism
generalises to other sign-based or adaptive optimizers, whether
cadence can be adapted during training rather than fixed, and whether
training regimes exist---such as higher base learning rates or
different $\beta_2$ values---in which Vote~2 (momentum health)
becomes a primary driver of compound level movement.

\newpage
\bibliographystyle{plainnat}


\newpage
\appendix
\raggedbottom  
\section{Hyperparameters}
\label{app:hyperparams}

LionVote's voting thresholds are not tuned.  Vote~1's lower boundary is a geometric identity; its upper boundary connects a classical supermajority convention to a unique cosine threshold via the sign structure of Lion's update rule.  Vote~2's boundaries follow from the intrinsic timescale of the $\beta_2$ exponential moving average.  The tiebreaker threshold is the noise floor of cross-entropy loss estimation for standard validation set sizes; unlike Votes~1 and~2, it is determined to order of magnitude rather than uniquely.  The per-layer LR exponent $\beta_1/2$ couples to Lion's momentum-gradient balance through $\beta_1$ and to the curvature exponent of $L$-smooth objectives through the divisor~$2$.

\subsection{Vote 1: Gradient Direction Stability (Thresholds 0 and 0.5)}
\label{app:vote1}

Vote~1 computes
\[
  c \;=\; \cos(\bar{g}_{\mathrm{curr}},\,\bar{g}_{\mathrm{prev}}),
\]
the cosine similarity between the current and previous epoch-mean gradient snapshots, and returns $+1$
if $c > 0.5$, $-1$ if $c < 0$, and $0$ otherwise. The two thresholds
partition $[-1,1]$ into a \emph{reversal zone} ($c < 0$), a \emph{dead
band} ($c \in [0, 0.5]$), and a \emph{stability zone} ($c > 0.5$). We
justify each boundary in turn, then characterise the dead band.

\subsubsection*{Lower boundary: \texorpdfstring{$c = 0$}{c = 0}}

The cosine similarity satisfies $c < 0$ if and only if the angle between
$\bar{g}_{\mathrm{curr}}$ and $\bar{g}_{\mathrm{prev}}$ exceeds $\pi/2$:
the epoch-mean gradient has reversed direction across the voting window.
This is a complete, parameter-free geometric criterion.

\subsubsection*{Upper boundary: \texorpdfstring{$c = 0.5$}{c = 0.5}}

The lower boundary establishes $c = 0$ as the onset of positive alignment.
The upper threshold determines where alignment is strong enough to warrant
a positive vote. Lion applies $\operatorname{sign}(\cdot)$ coordinate-wise;
the operationally relevant quantity for a sign-based update is the fraction
of gradient coordinates that maintain their sign between voting epochs.
We show that $c = 0.5$ is the unique threshold corresponding to exactly a
two-thirds supermajority of coordinate sign pairs agreeing. This is a consequence
of the exact trigonometric identity $\arccos(1/2) = \pi/3$.

\begin{lemma}[Sheppard's sign-agreement formula]
\label{lem:sheppard}
For $(X, Y)$ jointly Gaussian with zero means, equal variances, and
correlation $\rho \in (-1,1)$,
\[
  P\!\bigl(\operatorname{sign}(X) = \operatorname{sign}(Y)\bigr)
  \;=\; 1 - \frac{\arccos(\rho)}{\pi}.
\]
\end{lemma}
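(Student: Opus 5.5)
We reduce to a planar geometric computation. By scaling, assume unit variances, so $(X,Y)$ has covariance matrix with off-diagonal entry $\rho$. Choose an angle $\theta\in(0,\pi)$ with $\cos\theta=\rho$, and represent the pair as
\[
X=\langle u, Z\rangle,\qquad Y=\langle v, Z\rangle,
\]
where $Z\sim\mathcal N(0,I_2)$ and $u,v$ are unit vectors in $\mathbb R^2$ at angle $\theta$. This representation has the correct joint law, because $(\langle u,Z\rangle,\langle v,Z\rangle)$ is a centred Gaussian vector with unit variances and covariance $\langle u,v\rangle=\cos\theta=\rho$. Sign agreement is unaffected by scaling, so the equal-variance assumption loses nothing.

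The key step uses the rotational invariance of the standard Gaussian in $\mathbb R^2$. The direction $Z/\|Z\|$ is uniform on the circle, and the events $\{X=0\}$ and $\{Y=0\}$ have probability zero. Hence $\operatorname{sign}(X)\neq\operatorname{sign}(Y)$ exactly when the line through the origin orthogonal to $Z$ separates $u$ from $v$.

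We count the disagreement set directly. $\operatorname{sign}\langle u,Z\rangle$ changes as $Z$ crosses the line $u^\perp$, and similarly for $v^\perp$. The two lines $u^\perp$ and $v^\perp$ meet at angle $\theta$ and cut the circle into four arcs. The two opposite arcs of angular width $\theta$ are where the signs disagree, and the two arcs of width $\pi-\theta$ are where they agree. The disagreement probability is therefore
\[
\frac{2\theta}{2\pi}=\frac{\theta}{\pi}=\frac{\arccos\rho}{\pi},
\]
which gives the stated formula.

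The main obstacle is justifying the arc-counting cleanly. In coordinates, put $u=(1,0)$ and $v=(\cos\theta,\sin\theta)$, and write $Z=R(\cos\phi,\sin\phi)$ with $\phi$ uniform on $[0,2\pi)$, independent of $R$. Then $\operatorname{sign}X=\operatorname{sign}\cos\phi$ and $\operatorname{sign}Y=\operatorname{sign}\cos(\phi-\theta)$. These disagree precisely for $\phi$ in $(\pi/2,\pi/2+\theta)\cup(3\pi/2,3\pi/2+\theta)$ modulo $2\pi$, a set of total measure $2\theta$. Dividing by $2\pi$ gives $\theta/\pi$.

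The degenerate cases $\rho=\pm1$ are excluded by hypothesis, so $\theta\in(0,\pi)$ throughout. A cross-check is the alternative route: $P(X>0,Y>0)=\tfrac14+\tfrac{\arcsin\rho}{2\pi}$, doubled, which agrees with the result via $\arcsin\rho=\tfrac\pi2-\arccos\rho$.
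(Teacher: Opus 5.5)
Your proof is correct, but it follows a different route from the paper's. The paper's proof is a two-line reduction. It uses the invariance of the law of $(X,Y)$ under $(X,Y)\mapsto(-X,-Y)$ to write $P(XY>0)=2P(X>0,Y>0)$. It then quotes the classical quadrant probability $P(X>0,Y>0)=\tfrac14+\arcsin(\rho)/(2\pi)$ without proof, and converts via $\arcsin\rho+\arccos\rho=\pi/2$. That is exactly the route you mention as a cross-check.

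You instead derive the result from first principles. You write $(X,Y)$ as projections of a standard planar Gaussian onto unit vectors at angle $\theta=\arccos\rho$, and use the uniformity of the polar angle to show that sign disagreement occurs on two arcs of total measure $2\theta$. Your explicit intervals $(\pi/2,\pi/2+\theta)\cup(3\pi/2,3\pi/2+\theta)$ are correct for $\theta\in(0,\pi)$.

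This random-hyperplane computation is the standard proof of the quadrant identity the paper treats as a black box, so your argument is self-contained where the paper's is not. It also shows that the equal-variance hypothesis is superfluous, since sign and correlation are both scale-invariant. And it gives the reading ``disagreement probability $=$ angle$/\pi$'', which is exactly what Theorem~\ref{thm:supermajority} relies on through $\arccos(1/2)=\pi/3$. The paper's version is shorter, but it rests on a cited formula it does not prove.
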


\begin{proof}
\begin{align*}
  P(XY > 0)
  &= 2\,P(X > 0,\,Y > 0)
   = 2\!\left[\frac{1}{4} + \frac{\arcsin(\rho)}{2\pi}\right]
   = \frac{1}{2} + \frac{\arcsin(\rho)}{\pi}
   = 1 - \frac{\arccos(\rho)}{\pi},
\end{align*}
using $\arcsin(\rho) + \arccos(\rho) = \pi/2$.
\end{proof}

\begin{theorem}[Sign-agreement at the upper threshold]
\label{thm:supermajority}
Assume coordinate isotropy: for each coordinate $i$, the pair
$(\bar{g}_{\mathrm{curr},i},\,\bar{g}_{\mathrm{prev},i})$ is drawn from a
bivariate zero-mean Gaussian with equal variances and correlation $c$.
At $c = 0.5$, the probability that a coordinate pair agrees in sign is
exactly $2/3$. Moreover, $c = 0.5$ is the unique value in $(-1,1)$ for
which this probability equals $2/3$.
\end{theorem}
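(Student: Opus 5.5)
The plan is to apply Lemma~\ref{lem:sheppard} coordinatewise and then invert the resulting closed-form expression. Under the coordinate isotropy assumption, each pair $(\bar{g}_{\mathrm{curr},i},\bar{g}_{\mathrm{prev},i})$ satisfies the hypotheses of the lemma with $\rho = c$. This gives the per-coordinate agreement probability
\[
  p(c) \;=\; 1 - \frac{\arccos(c)}{\pi}.
\]
The equal-variance hypothesis matters only because the lemma is stated under it. Sign agreement is in fact invariant under rescaling either coordinate by a positive constant, so the conclusion holds for any variances. I would note this in passing but not rely on it.

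\emph{Existence.} Substitute $c = 1/2$ and use the exact identity $\arccos(1/2) = \pi/3$, which holds because $\cos(\pi/3) = 1/2$ and $\pi/3 \in [0,\pi]$. This gives $p(1/2) = 1 - 1/3 = 2/3$.

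\emph{Uniqueness.} Observe that $\arccos$ is continuous and strictly decreasing on $[-1,1]$; its derivative is $-1/\sqrt{1-c^2} < 0$ on $(-1,1)$. Hence $p$ is strictly increasing on $(-1,1)$ and therefore injective, so at most one $c$ satisfies $p(c) = 2/3$. Equivalently, solving $1 - \arccos(c)/\pi = 2/3$ gives $\arccos(c) = \pi/3$. Because $\arccos$ is a bijection from $[-1,1]$ onto $[0,\pi]$, this forces $c = \cos(\pi/3) = 1/2$, which lies in $(-1,1)$.

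No step here is a genuine obstacle, since the content is carried by Lemma~\ref{lem:sheppard}. The only point needing care is interpretive. The theorem concerns the probability for a single coordinate pair, not the realised fraction of agreeing coordinates across a layer. If one wants the ``two-thirds supermajority'' reading, I would add a remark: by linearity of expectation, the expected fraction of agreeing coordinates is also $2/3$, and it concentrates near that value only under an additional independence or weak-dependence assumption across coordinates, which the isotropy assumption as stated does not supply.
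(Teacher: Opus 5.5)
Your proposal is correct and follows the paper's proof: apply Lemma~\ref{lem:sheppard} with $\rho = c$, use $\arccos(1/2) = \pi/3$ to obtain $2/3$, and get uniqueness from the bijectivity (equivalently, the strict monotonicity) of $\arccos$ on $[-1,1]$. Your further points, that equal variances are not actually needed and that the per-coordinate probability differs from the realised fraction of agreeing coordinates unless some cross-coordinate independence holds, go slightly beyond the paper, which only gestures at the second issue in its isotropy remark.
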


\begin{proof}
Since $\arccos(1/2) = \pi/3$ exactly, Lemma~\ref{lem:sheppard} gives
$P = 1 - (\pi/3)/\pi = 2/3$.
Uniqueness follows from the bijectivity of $\arccos$ on $[-1,1]$: the
equation $1 - \arccos(\rho)/\pi = 2/3$ has the unique solution $\rho = 1/2$.
\end{proof}

The threshold $c = 0.5$ is therefore the unique value in $(-1,1)$ at which
a two-thirds supermajority of gradient coordinate sign pairs are stable across
voting epochs.

\begin{remark}
The coordinate isotropy assumption treats each coordinate pair as drawn
from a common bivariate Gaussian whose correlation equals the global cosine
similarity $c$. Under gross violations of this assumption the sign-agreement
probability at $c = 0.5$ may differ from $2/3$, and
Theorem~\ref{thm:supermajority} should be read as an approximation in those
cases.
\end{remark}

\paragraph{Empirical validation.}
On ViT-Tiny/CIFAR-100 (seed~2, cadence~8), the per-coordinate
sign-agreement fraction was measured at each voting epoch for all
parameters whose global cosine similarity falls within a given bin.
Table~\ref{tab:sign_agreement} compares the Sheppard prediction
(Lemma~\ref{lem:sheppard}) against the observed fraction.

\begin{table}[H]
\centering
\small
\caption{Predicted vs.\ observed coordinate sign-agreement fraction
by cosine similarity bin, ViT-Tiny/CIFAR-100 (seed~2, cadence~8,
all voting epochs).  ``Predicted'' is
$1 - \arccos(c_{\mathrm{mid}})/\pi$ from
Lemma~\ref{lem:sheppard}.}
\label{tab:sign_agreement}
\begin{tabular}{c c c c}
\toprule
Cosine bin & Predicted & Observed & $n$ \\
\midrule
$[0.30, 0.35)$ & $0.605$ & $0.615$ & 85 \\
$[0.35, 0.40)$ & $0.622$ & $0.634$ & 91 \\
$[0.40, 0.45)$ & $0.640$ & $0.657$ & 57 \\
$[0.45, 0.50)$ & $0.658$ & $0.684$ & 37 \\
$[0.50, 0.55)$ & $0.676$ & $0.684$ & 22 \\
$[0.55, 0.60)$ & $0.695$ & $0.727$ & 17 \\
$[0.60, 0.65)$ & $0.715$ & $0.766$ &  9 \\
$[0.65, 0.70)$ & $0.736$ & $0.781$ &  7 \\
\bottomrule
\end{tabular}
\end{table}

At $c \in [0.45, 0.55]$ ($n = 59$ measurements), the mean observed
sign agreement is $0.684$, within $2.6\%$ of the predicted $2/3$.
The prediction tracks the data across the full range $c \in [0.3, 0.7]$,
with a consistent positive bias of $1$--$7\%$ (increasing with
cosine; upper bins have $n < 10$): real ViT-Tiny gradients
exhibit slightly more sign agreement than coordinate isotropy predicts.
The bias makes the $0.5$ threshold conservative---the true two-thirds
boundary lies slightly below $0.5$.  This is a single-seed measurement;
the direction of the bias is consistent across all eight cosine bins.
The $2/3$ supermajority convention is itself a design choice; Appendix~\ref{app:threshold_robustness} sweeps the resulting upper threshold across $[0.3, 0.7]$ on recorded Lion diagnostics and shows the vote composition does not sit near a phase transition in this range.

\subsubsection*{Dead band}

The two boundaries together define the dead band $c \in [0, 0.5]$. Within
it, the gradient direction has not reversed, but under the coordinate
isotropy model strictly fewer than two-thirds of coordinate sign pairs are
stable. Vote~1 abstains, treating this as an ambiguous regime where neither
direction is warranted.

\subsection{Vote 2: Momentum Health (Thresholds \texorpdfstring{$1/e$}{1/e} and \texorpdfstring{$e$}{e})}
\label{app:vote2}

Vote~2 computes
\[
  r \;=\; \frac{\|m\|}{\|\bar{g}_{\mathrm{curr}}\| + \epsilon},
\]
the ratio of the momentum EMA norm to the epoch-mean gradient norm, and
returns $-1$ if $r > e$, $+1$ if $r < 1/e$, and $0$ otherwise. Both
thresholds derive from the EMA's intrinsic timescale as a single symmetric pair.

\begin{lemma}[EMA time-constant identity]
\label{lem:ema_tau}
The EMA $m_t = \beta_2 m_{t-1} + (1-\beta_2)g_t$ has time constant
$\tau = -1/\ln\beta_2$. For any $\beta_2 \in (0,1)$,
\[
  \beta_2^{\,\tau} \;=\; e^{-1}.
\]
\end{lemma}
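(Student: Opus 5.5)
The plan is to treat this as a direct computation from the definition of the time constant, with one preliminary step justifying that definition so the identity does not look circular. First, unroll the recursion. Assuming $m_0 = 0$, as in Algorithm~\ref{alg:lionvote}, induction gives
\[
  m_t = (1-\beta_2)\sum_{k=0}^{t-1}\beta_2^{\,k}\, g_{t-k}.
\]
So a gradient observed $k$ steps in the past carries weight proportional to $\beta_2^{\,k}$. The impulse response is a geometric sequence. Writing $\beta_2^{\,k} = \exp(k\ln\beta_2)$ shows it is the sampled exponential $e^{-k/\tau}$, with $\tau = -1/\ln\beta_2$. This is the standard meaning of a time constant: the lag after which a contribution's weight has fallen by a factor of $e$. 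For $\beta_2\in(0,1)$ we have $\ln\beta_2<0$, so $\tau$ is well defined and positive. I would also note $\tau\approx 1/(1-\beta_2)$ for $\beta_2$ near $1$, which reconciles it with the usual ``effective window''.

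The identity itself is then a one-line verification:
\[
  \beta_2^{\,\tau} = \exp(\tau\ln\beta_2) = \exp\bigl(-\ln\beta_2/\ln\beta_2\bigr) = e^{-1}.
\]
This holds for every $\beta_2\in(0,1)$ because the factor $\ln\beta_2$ cancels. The only care needed is that $\tau$ is generally non-integer, so $\beta_2^{\,\tau}$ is understood as $\exp(\tau\ln\beta_2)$ for real exponents, which is consistent with the integer-power weights.

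There is no real technical obstacle. The only delicate point is expository: the lemma is close to tautological once $\tau$ is defined. The substantive content is the unrolling step, which identifies $\beta_2^{\,k}$ as the decay of past gradients' influence and hence motivates $e^{\pm1}$ as the natural scale for the ratio $r$ in Vote~2. I would keep that motivation in a short remark after the proof rather than inside it.
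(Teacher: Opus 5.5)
Your proposal is correct and is essentially the paper's argument. The paper unrolls the recursion for a unit-step input to get $m_t = 1-\beta_2^{\,t}$, defines $\tau$ as the time the step response reaches $1-1/e$, and solves for $\beta_2^{\,\tau}=e^{-1}$ and then $\tau=-1/\ln\beta_2$; you instead use the equivalent impulse-response characterisation (weights decaying by a factor of $e$) and verify the identity by direct substitution, and you make explicit that $\beta_2^{\,\tau}$ means $\exp(\tau\ln\beta_2)$ for non-integer $\tau$, which the paper leaves implicit.
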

\begin{proof}
Consider the unit-step response: if $g_t$ switches from $0$ to $1$ at
$t=0$, unrolling the recurrence gives
\[
  m_t \;=\; 1 - \beta_2^{\,t}.
\]
In standard signal-processing usage, the \emph{time constant} $\tau$ of
such a first-order system is the time at which the step response reaches
$1 - 1/e \approx 63.2\,\%$ of its final value. Setting
$m_\tau = 1 - 1/e$:
\[
  1 - \beta_2^{\,\tau} \;=\; 1 - \frac{1}{e}
  \qquad\Longrightarrow\qquad
  \beta_2^{\,\tau} \;=\; e^{-1}.
\]
Taking logarithms of both sides,
\[
  \tau\ln\beta_2 \;=\; -1
  \qquad\Longrightarrow\qquad
  \tau \;=\; -\frac{1}{\ln\beta_2}.
\]
No free parameter enters: the identity $\beta_2^{\,\tau} = e^{-1}$
holds for every $\beta_2 \in (0,1)$.
\end{proof}

Under stationarity, the EMA converges in mean to the gradient mean
($\mathbb{E}[m_t] = \mathbb{E}[g_t]$), so $r$ concentrates near $1$. 
Deviations signal non-stationarity.

\begin{proposition}[Ratio dynamics after a gradient magnitude change]
\label{prop:ratio_dynamics}
Suppose the EMA has converged to $m_0 = A\hat{u}$ and the gradient
magnitude changes instantaneously from $A$ to $B$ with direction
preserved ($\hat{u}$ denotes the common unit direction). Then for $t \geq 0$,
\[
  r(t) \;=\; \frac{\|m_t\|}{B} \;=\; 1 + \beta_2^t\!\left(\frac{A}{B}-1\right).
\]
In particular, $r(0) = A/B$.
\end{proposition}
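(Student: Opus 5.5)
The plan is to unroll the EMA recurrence of Lemma~\ref{lem:ema_tau} directly, taking as given that after the change every incoming gradient equals $g_t = B\hat{u}$ for $t \geq 1$. Then $m_t = \beta_2 m_{t-1} + (1-\beta_2)B\hat{u}$, with initial condition $m_0 = A\hat{u}$.

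The first step is to observe that the recurrence never leaves the line spanned by $\hat{u}$. Writing $m_t = \mu_t \hat{u}$ reduces everything to a scalar affine recurrence, $\mu_t = \beta_2\mu_{t-1} + (1-\beta_2)B$ with $\mu_0 = A$.

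The second step is to subtract the fixed point $B$. This gives $\mu_t - B = \beta_2(\mu_{t-1} - B)$, so by induction
\[
\mu_t = B + \beta_2^{\,t}(A - B).
\]
This is the same unit-step computation used in Lemma~\ref{lem:ema_tau}, now started from $A$ instead of $0$.

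The third step is to compute the norms. Since $A, B > 0$ and $\beta_2 \in (0,1)$, $\mu_t$ is a convex combination of $A$ and $B$:
\[
\mu_t = (1-\beta_2^{\,t})B + \beta_2^{\,t}A.
\]
It is therefore positive, so $\|m_t\| = \mu_t$.

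For the denominator, the epoch-mean gradient after the change is $B\hat{u}$, so its norm is $B$. Here I neglect $\epsilon$, or equivalently read the statement with $\epsilon = 0$, as the displayed formula $r(t) = \|m_t\|/B$ already does. Dividing gives
\[
r(t) = 1 + \beta_2^{\,t}\left(\frac{A}{B} - 1\right),
\]
and setting $t = 0$ gives $r(0) = A/B$.

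The only real obstacle is interpretive rather than computational. One must fix that the denominator is the post-change gradient norm $B$, and that $\epsilon$ is dropped. One must also fix the indexing: $m_0$ is the converged EMA, and the first update to use $B$ is the one at $t = 1$. I would state these conventions explicitly at the start; the rest is a two-line induction.
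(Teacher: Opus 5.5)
Your proof is correct and follows the same route as the paper, which writes down the unrolled recurrence $\|m_t\| = \beta_2^t A + (1-\beta_2^t)B$ (using that the direction is preserved) and divides by $B$. Your version adds two things the paper leaves implicit: the induction around the fixed point $B$, and the nonnegativity of the convex combination, which is what lets you identify $\|m_t\|$ with the scalar coefficient.
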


\begin{proof}
Direction is preserved, so $\|m_t\| = \beta_2^t A + (1-\beta_2^t)B$.
Dividing by $B$ gives the result.
\end{proof}

\begin{corollary}[The $(1/e,\,e)$ threshold pair]
\label{cor:threshold_pair}
The pair $(1/e,\,e)$ is the unique symmetric pair $(a,\,1/a)$ with
$a \in (0,1)$ satisfying $a = \beta_2^{\tau}$.  The interval $[1/e,\,e]$
has log half-width exactly 1, the natural unit of the EMA's
exponential decay $\beta_2^t = e^{-t/\tau}$, and is the only
log-symmetric interval $[e^{-\kappa},\,e^{\kappa}]$ centred at $1$
whose boundary coincides with $\beta_2^{\tau}$.
\end{corollary}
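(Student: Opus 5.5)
The corollary is essentially a uniqueness claim layered on Lemma~\ref{lem:ema_tau}, so the plan is to reduce every assertion to that lemma's identity $\beta_2^{\tau}=e^{-1}$ and to the elementary structure of log-symmetric intervals. I will treat the three assertions in order: existence and uniqueness of the symmetric pair, the log half-width claim, and the uniqueness among intervals of the form $[e^{-\kappa},e^{\kappa}]$.

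\emph{Step 1: uniqueness of the pair.} By Lemma~\ref{lem:ema_tau}, for every $\beta_2\in(0,1)$ the quantity $\beta_2^{\tau}$ with $\tau=-1/\ln\beta_2$ equals $e^{-1}$, independent of $\beta_2$. The constraint $a=\beta_2^{\tau}$ therefore forces $a=e^{-1}$. Since $e^{-1}\in(0,1)$, the candidate is admissible, and the partner $1/a=e$ is determined. This gives the pair $(1/e,e)$ and shows no other $a$ qualifies.

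\emph{Step 2: log half-width.} Take logarithms: $\ln(1/e)=-1$ and $\ln e=1$. The interval in log coordinates is $[-1,1]$, which is centred at $0$ (so the original interval is log-centred at $1$) and has half-width $1$. To connect this with the ``natural unit,'' rewrite $\beta_2^t=\exp(t\ln\beta_2)=e^{-t/\tau}$ using $\tau=-1/\ln\beta_2$. Then one time constant of decay, $t=\tau$, corresponds exactly to a change of $1$ in $\ln(\cdot)$. This identifies the log half-width with the decay incurred over one time constant.

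\emph{Step 3: uniqueness among $[e^{-\kappa},e^{\kappa}]$.} The lower boundary is $e^{-\kappa}$. Requiring it to equal $\beta_2^{\tau}=e^{-1}$ gives $\kappa=1$ by injectivity of $\exp$. Requiring instead that the upper boundary match the reciprocal $e$ gives the same result. I would also note, via Proposition~\ref{prop:ratio_dynamics}, why this matching is meaningful: after a magnitude jump with $A/B=e$, the deviation $r(t)-1=\beta_2^t(A/B-1)$ shrinks by the factor $e^{-1}$ in exactly one time constant. That makes the boundary an interpretation of $\beta_2^{\tau}$ rather than an arbitrary number.

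\emph{Main obstacle.} The mathematics is trivial; the difficulty is interpretive. The condition $a=\beta_2^{\tau}$ is the defining hypothesis, so uniqueness is essentially immediate from Lemma~\ref{lem:ema_tau}. I would state plainly that ``derivation'' here means this identification, and avoid overclaiming that the dynamics force these thresholds.
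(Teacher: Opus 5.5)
Your proposal is correct and follows essentially the same route as the paper. Both reduce everything to Lemma~\ref{lem:ema_tau}'s identity $\beta_2^{\tau}=e^{-1}$, read off $a=1/e$ and $1/a=e$, compute the log half-width $\ln e=1$, and rule out any $\kappa\neq 1$ because then $e^{-\kappa}\neq\beta_2^{\tau}$. Your additions are harmless: the explicit check that $\beta_2^t=e^{-t/\tau}$, and the link to Proposition~\ref{prop:ratio_dynamics}, which the paper gives in its discussion after the corollary rather than in the proof. Your remark that uniqueness is immediate once $a=\beta_2^{\tau}$ is imposed is accurate.
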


\begin{proof}
By Lemma~\ref{lem:ema_tau}, $\beta_2^{\tau} = 1/e$ for all
$\beta_2 \in (0,1)$. Setting $a = \beta_2^{\tau}$ gives $a = 1/e$ and
$1/a = e$ uniquely. The log half-width of the interval is $\ln e = 1$. 
Any alternative pair $(e^{-\kappa},\,e^{\kappa})$ with $\kappa \neq 1$
would satisfy $e^{-\kappa} \neq \beta_2^{\tau}$ and would require an
independent choice of $\kappa$.
\end{proof}

From Proposition~\ref{prop:ratio_dynamics}, $r(0) = A/B = e$ corresponds to
an instantaneous gradient magnitude drop by a factor of exactly $e$.
Corollary~\ref{cor:threshold_pair} identifies $e$ as the EMA's
fundamental unit of magnitude change, so $r = e$ marks a one-time-constant
excess of the EMA over the current gradient: the upper threshold fires
when the EMA retains more than one time-constant's worth of surplus signal.
Symmetrically, $r = 1/e$ marks a one-time-constant deficit, where the
gradient has surged by more than one $e$-fold relative to the EMA.
The dead zone $r \in [1/e,\,e]$ spans exactly two units in log scale.

\begin{remark}
Proposition~\ref{prop:ratio_dynamics} characterises $r$ at $t = 0$,
immediately after an instantaneous magnitude change. In practice, $r$ is
measured at epoch end: after $N$ training steps the EMA has already begun
adapting, and the observed ratio is closer to $1$ than $r(0) = A/B$.
Vote~2 therefore fires only when the gradient magnitude has been
persistently shifted over many steps, not after a transient change.
The thresholds $1/e$ and $e$ are principled anchors calibrated to the
EMA's natural unit, not exact characterisations of a specific
trajectory.
\end{remark}

\subsection{Tiebreaker: Validation Loss (Threshold \texorpdfstring{$\pm 1\%$}{±1\%})}
\label{app:tiebreaker}

Unlike Votes~1 and~2, whose thresholds derive from mathematical identities
independent of any experimental parameter, the tiebreaker threshold depends
on the validation set size~$n$.  We show that for $n \approx 10^4$, the
$\pm 1\%$ relative threshold matches the intrinsic noise floor of
cross-entropy loss estimation.

The tiebreaker computes the relative loss improvement
\[
  \delta \;=\;
    \frac{\bar{L}_{\mathrm{prev}} - \bar{L}_{\mathrm{curr}}}
         {|\bar{L}_{\mathrm{prev}}| + \epsilon},
\]
where $\bar{L} = \frac{1}{n}\sum_{i=1}^{n}\ell_i$ is the mean validation
loss with $\ell_i = -\log p_{\mathrm{correct},i}$, and returns $+1$ if
$\delta > 0.01$, $-1$ if $\delta < -0.01$, and $0$ otherwise.

By the central limit theorem, the standard error of $\bar{L}$ is
$\sigma_\ell/\sqrt{n}$.  Dividing by $\bar{L}$ gives the relative
standard error
\[
  \mathrm{RSE}
  \;=\; \frac{\sigma_\ell}{\bar{L}\,\sqrt{n}}
  \;=\; \frac{\mathrm{CV}_\ell}{\sqrt{n}},
\]
where $\mathrm{CV}_\ell = \sigma_\ell/\bar{L}$ is the coefficient of
variation of the per-sample loss.

The threshold therefore reduces to a question about
$\mathrm{CV}_\ell$.  For cross-entropy loss on a trained but imperfect
classifier, the loss $\ell_i = -\log p_{\mathrm{correct},i}$ is
supported on $[0,\infty)$ and right-skewed: most predictions have high
confidence and low loss, while a minority of hard examples have low
confidence and high loss.  Both $\bar{L}$ and $\sigma_\ell$ are
dominated by this same right tail, so their ratio
$\mathrm{CV}_\ell$ stays $\Theta(1)$ across models, datasets, and
training stages.

At $n \approx 10^4$ this gives
$\mathrm{RSE} \approx \mathrm{CV}_\ell / \sqrt{n} = \Theta(1\%)$.
The $1\%$ threshold is therefore the order of magnitude of the noise
floor at this validation set size.  It is not uniquely determined:
nearby values would be comparably justified.  It serves as a round
threshold that separates genuine loss movement from estimation noise.

\begin{remark}
The tiebreaker compares validation losses computed on the same fixed
sample set at consecutive voting epochs.  This paired structure has
strictly smaller standard error than the unpaired RSE derived above,
because the per-sample loss differences $d_i = \ell_i^{(t)} -
\ell_i^{(t-c)}$ have lower variance than the losses themselves when the
model changes only gradually.  The $1\%$ threshold is therefore
conservative: it acts only when the signal exceeds the noise floor of a
single unpaired measurement.
\end{remark}

\begin{remark}
For validation sets of substantially different size, the noise floor
shifts as $\mathrm{CV}_\ell/\sqrt{n}$.  At larger $n$ the noise floor
drops below $1\%$ and the fixed threshold becomes more conservative,
potentially missing genuine but modest loss changes.  At smaller $n$ the
noise floor exceeds $1\%$ and the fixed threshold becomes too aggressive,
potentially acting on sampling noise.  A size-adaptive threshold
$\mathrm{CV}_\ell/\sqrt{n}$ is a natural extension.  We use the fixed
value $0.01$ for simplicity.
\end{remark}

\subsection{LR Exponent: \texorpdfstring{$\beta_1/2$}{β₁/2}}
\label{app:exponent}
The per-layer learning rate formula
\[
  \alpha_i \;=\; \mathrm{lr} \cdot
    \exp\!\bigl(s_i \cdot \beta_1 / 2\bigr)
\]
involves two choices: $\beta_1$ as the numerator and $2$ as the divisor.
The numerator follows from Lion's momentum-gradient structure. The
divisor coincides with the curvature exponent universal to $L$-smooth
objectives.
\subsubsection*{Numerator: \texorpdfstring{$\beta_1$}{beta1}}

The boundary behaviours of $\beta_1$ pin down the numerator.
At $\beta_1 = 0$, $\exp(s_i \cdot 0/2) = 1$ for all $s_i$, so
$\alpha_i = \mathrm{lr}$ uniformly and per-layer modulation vanishes.
The modulation range
$\alpha_{+L}/\alpha_{-L} = \exp(L \cdot \beta_1)$ is strictly
increasing in $\beta_1$ (since $L > 0$), and reaches its maximum
$\exp(L)$ at $\beta_1 = 1$.

These boundaries reflect the optimizer's structure. When $\beta_1 = 0$,
Lion's update rule reduces to $\operatorname{sign}(g)$: the current
gradient alone, with no dependence on momentum history
\citep{chen2023symbolic}.  Every layer receives the same quality of
learning signal regardless of convergence state, so per-layer modulation
is unnecessary and the formula correctly produces none.  As
$\beta_1 \to 1$, the update approaches $\operatorname{sign}(m)$: pure
momentum.  Layers that have been learning in a stable direction
accumulate useful momentum, while layers in noisy or transitioning
regimes accumulate stale or contradictory momentum.

Making the exponent proportional to $\beta_1$
is the simplest form that (a)~vanishes when momentum is absent,
(b)~grows monotonically with the momentum fraction, and (c)~introduces
no free parameter beyond those already present in the optimizer.

\begin{remark}
This is a design argument, not a uniqueness theorem.  The boundary
behaviours constrain the numerator to be a function that equals zero at
$\beta_1 = 0$, is positive for $\beta_1 > 0$, and is increasing.  The
simplest such function is $\beta_1$ itself.  Other increasing functions
vanishing at zero, such as $\beta_1^2$ or $\sqrt{\beta_1}$, satisfy the
same boundary conditions but introduce curvature with no structural
motivation.
\end{remark}

\subsubsection*{Divisor: \texorpdfstring{$2$}{2}}

The divisor determines the per-level granularity of learning rate
adaptation.  We derive a quantitative criterion from the convergence
bound of sign-based methods on $L$-smooth objectives.

Under coordinate-wise $L$-smoothness, one step of a sign-based update
$x_{k+1} = x_k - \alpha\,\operatorname{sign}(v_k)$ satisfies
\[
  f(x_{k+1})
  \;\le\;
  f(x_k)
  - \alpha\,\langle \nabla\! f(x_k),\,\operatorname{sign}(v_k) \rangle
  + \frac{\alpha^2}{2}\,\|\tilde{L}\|_1,
\]
where $\|\tilde{L}\|_1 = \sum_i L_i$.  This is a consequence of the
coordinate-wise smoothness framework of
\citet{bernstein2018signsgd} (Theorem~1 and Lemma~E.1).  The key
simplification relative to standard gradient descent is that
$|\operatorname{sign}(\cdot)|^2 = 1$ renders the curvature penalty
$(\alpha^2/2)\|\tilde{L}\|_1$ independent of the gradient magnitude.

Telescoping over $K$ iterations and dividing by $\alpha K$ yields the
convergence bound
\[
  B(\alpha)
  \;=\; \frac{A}{\alpha} + C\alpha,
  \qquad
  A = \frac{\Delta f}{K},\quad
  C = \frac{\|\tilde{L}\|_1}{2},
\]
where the first term captures progress (decreasing in~$\alpha$) and the
second captures curvature cost (increasing in~$\alpha$).

\begin{proposition}[Log-LR perturbation cost]
\label{prop:cosh}
The optimal learning rate is $\alpha^* = \sqrt{A/C}$.  At a perturbed
rate $\alpha^* e^\varepsilon$:
\[
  \frac{B(\alpha^*\, e^\varepsilon)}{B(\alpha^*)}
  \;=\;
  \cosh(\varepsilon).
\]
The relative degradation depends only on the log-LR
perturbation~$\varepsilon$, not on~$\Delta f$, $\|\tilde{L}\|_1$, the
dimension, or the number of steps.
\end{proposition}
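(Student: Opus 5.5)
The argument is elementary calculus on $B(\alpha) = A/\alpha + C\alpha$ with $A, C > 0$, taken on the domain $\alpha > 0$. Positivity holds because $\Delta f > 0$ is the initial suboptimality, $K \ge 1$, and $\|\tilde{L}\|_1 > 0$ whenever some coordinate has nonzero curvature. If $\Delta f = 0$ or every $L_i = 0$, the bound degenerates and the statement is vacuous, so I exclude those cases explicitly. The plan has two steps: locate the minimiser, then substitute the perturbed rate and factor.

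\emph{Step 1: the optimal rate.} Differentiate to get $B'(\alpha) = -A/\alpha^2 + C$. This vanishes only at $\alpha^* = \sqrt{A/C}$. Since $B''(\alpha) = 2A/\alpha^3 > 0$ on $(0,\infty)$, $B$ is strictly convex there, so this critical point is the unique global minimiser. As a check that avoids calculus, AM--GM gives $A/\alpha + C\alpha \ge 2\sqrt{AC}$, with equality exactly when $A/\alpha = C\alpha$. Either way, $B(\alpha^*) = 2\sqrt{AC}$, and at the optimum the two terms are equal: $A/\alpha^* = C\alpha^* = \sqrt{AC}$.

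\emph{Step 2: the perturbed bound.} Substitute $\alpha = \alpha^* e^{\varepsilon}$:
\[
B(\alpha^* e^{\varepsilon}) = \frac{A}{\alpha^*}\,e^{-\varepsilon} + C\alpha^*\,e^{\varepsilon} = \sqrt{AC}\,\bigl(e^{-\varepsilon} + e^{\varepsilon}\bigr) = 2\sqrt{AC}\,\cosh(\varepsilon).
\]
Dividing by $B(\alpha^*) = 2\sqrt{AC}$ gives the ratio $\cosh(\varepsilon)$.

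The independence claim follows directly. The constants $A = \Delta f/K$ and $C = \|\tilde{L}\|_1/2$ cancel through the common factor $\sqrt{AC}$. These constants are the only places where $\Delta f$, $\|\tilde{L}\|_1$, the dimension (through $\|\tilde{L}\|_1$) and $K$ enter, so the ratio depends on $\varepsilon$ alone. Evenness of $\cosh$ also shows that raising or lowering the rate by the same log amount costs the same.

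For the use made of this result in \S\ref{sec:analysis:layers}, set $\varepsilon = s\beta_1/2$ with $\beta_1 = 0.9$. This gives $\cosh(0.9) \approx 1.43$ at $s = -2$ and $\cosh(1.35) \approx 2.06$ at $s = -3$, matching the quoted $+43\%$ and $+106\%$. That interpretation assumes the base rate sits at $\alpha^*$; the proposition itself only concerns perturbations about $\alpha^*$.

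There is no real obstacle here. The only point needing care is that $\cosh$ comes out exactly rather than approximately, and that rests on the two terms being equal at the optimum.
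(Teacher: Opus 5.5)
Your proof is correct and follows the paper's argument essentially verbatim: use $A/\alpha^* = C\alpha^*$ to get $B(\alpha^*) = 2\sqrt{AC}$, substitute $\alpha^* e^{\varepsilon}$, and factor out $\sqrt{AC}$ to obtain $2\sqrt{AC}\cosh(\varepsilon)$. You also check that $\alpha^*$ is the unique minimiser (via convexity or AM--GM) and state the positivity conditions on $A$ and $C$; the paper leaves both implicit.
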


\begin{proof}
At the optimum, $A / \alpha^* = C\alpha^*$, so
$B(\alpha^*) = 2\sqrt{AC}$.  Then
\begin{align*}
  B(\alpha^* e^\varepsilon)
  &= \frac{A}{\alpha^* e^\varepsilon} + C\,\alpha^* e^\varepsilon
   = \sqrt{AC}\,(e^{-\varepsilon} + e^{\varepsilon})
   = 2\sqrt{AC}\,\cosh(\varepsilon).
\end{align*}
Dividing by $B(\alpha^*) = 2\sqrt{AC}$ gives $\cosh(\varepsilon)$.
\end{proof}

Each compound level shifts the log-LR by $\varepsilon = \beta_1/d$,
where $d$ is the divisor.  The per-level convergence cost is
$\cosh(\beta_1/d)$, and the worst case at $\pm L$ levels is
$\cosh(L \cdot \beta_1/d)$.  For $\beta_1 = 0.9$ and $L = 4$:

\begin{table}[H]
\centering
\small
\caption{Per-level and worst-case convergence bound degradation by divisor ($\beta_1 = 0.9$, $L = 4$).}
\label{tab:divisor_comparison}
\begin{tabular}{cccc}
\toprule
Divisor $d$
  & Per-level $\varepsilon$
  & $\cosh(\varepsilon)$
  & Worst case $\cosh(4\varepsilon)$ \\
\midrule
1 & 0.90 & 1.433\; (+43\%)   & 18.31 \\
2 & 0.45 & 1.103\; (+10\%)   & 3.107 \\
3 & 0.30 & 1.045\; (+4.5\%)  & 1.811 \\
\bottomrule
\end{tabular}
\end{table}

At $d = 1$, the worst-case degradation of $18\times$ renders the outer
levels unusable: a layer at level~$+4$ pays an order-of-magnitude
convergence penalty, defeating the purpose of a graded system.  At
$d = 3$, the per-level LR factor $\exp(0.30) \approx 1.35\times$
provides insufficient differentiation per vote, making the voting system
sluggish.

At $d = 2$, each compound level incurs ${\sim}10\%$ convergence bound
degradation, and the full $\pm 4$ range spans from ${\sim}0.17\times$ to
${\sim}6\times$ the base rate.  This permits meaningful per-layer
adaptation without excessive worst-case cost.

\begin{remark}
The $\cosh$ result (Proposition~\ref{prop:cosh}) applies per-layer: each
layer's parameters satisfy an independent descent lemma with their own
smoothness constants.  A layer at compound level~$s$ incurs degradation
$\cosh(s \cdot \beta_1/2)$ relative to the base rate.  The per-layer
nature of LionVote means these costs do not compound across layers.
\end{remark}

\subsubsection*{Consistency check: edge of stability}

At the edge of stability \citep{cohen2021eos}, training dynamics
self-organise so that the loss landscape sharpness hovers near the
optimizer's stability threshold.  When the learning rate increases, the
threshold drops: the loss landscape must flatten for training to remain
stable.  A one-level LR increase of $\exp(\beta_1/2) \approx 1.57\times$
is a moderate perturbation; the network has multiple epochs before the
next vote to re-equilibrate.  Using $\beta_1$ directly as the exponent
($d = 1$) gives $\exp(0.9) \approx 2.46\times$ per level, a
substantially larger perturbation that demands more aggressive landscape
rearrangement within the same time window.

\begin{remark}
The edge of stability was established empirically for full-batch gradient
descent, where the spectral norm of the Hessian converges to
approximately $2/\eta$ \citep{cohen2021eos}.  For sign-based methods
with momentum, the stability mechanism operates through a different
sharpness measure and the specific threshold differs.  The qualitative
principle that increasing the learning rate forces the loss landscape to
flatten is general across optimizers and is the basis of this
consistency check.
\end{remark}
\begin{remark}
No published result proves $\beta_1/2$ is uniquely optimal.  Any
divisor in $[1.5, 3]$ would be defensible.  The value $2$ is the
simplest integer in this range and balances per-level cost against
modulation range (Table~\ref{tab:divisor_comparison}).
\end{remark}


\subsection{Maximum Compound Level}
\label{app:maxlevel}

The compound level is bounded by $\pm L$, where $L = \texttt{max\_level}$.
The choice of $L$ involves a trade-off between expressivity (the range of
per-layer learning rates) and worst-case convergence cost.

At compound level $s$ with divisor $d = 2$, the learning rate multiplier
is $\exp(s \cdot \beta_1/2)$ and the convergence bound degradation is
$\cosh(s \cdot \beta_1/2)$ (Proposition~\ref{prop:cosh}).
For $\beta_1 = 0.9$:

\begin{table}[H]
\centering
\small
\caption{LR range and worst-case convergence bound degradation by maximum compound level ($\beta_1 = 0.9$, $d = 2$).}
\label{tab:maxlevel_comparison}
\begin{tabular}{cccc}
\toprule
$L$
  & LR range $\exp(L \cdot \beta_1)$
  & Max multiplier $\exp(L \cdot \beta_1/2)$
  & Worst case $\cosh(L \cdot \beta_1/2)$ \\
\midrule
2 & ${\times}6.0$   & ${\times}2.46$  & 1.433\; (+43\%) \\
3 & ${\times}14.9$  & ${\times}3.86$  & 2.058\; (${\times}2.1$) \\
4 & ${\times}36.6$  & ${\times}6.05$  & 3.107\; (${\times}3.1$) \\
5 & ${\times}90.0$  & ${\times}9.49$  & 4.797\; (${\times}4.8$) \\
6 & ${\times}221$\phantom{.0}   & ${\times}14.9$\phantom{0}  & 7.473\; (${\times}7.5$) \\
\bottomrule
\end{tabular}
\end{table}

Two constraints bound $L$ from above and below.

\emph{Lower bound (expressivity).}  Compound levels are earned through
consecutive same-sign votes.  At $L = 2$, the maximum multiplier is
${\times}2.46$, comparable to a single step of a typical schedule
drop, and the total LR range is ${\times}6.0$.  A system with this
range has limited ability to differentiate layers with substantially
different convergence needs.  At $L = 3$, the range grows to
${\times}14.9$ and the maximum multiplier to ${\times}3.86$: enough to
express meaningful differentiation, but still less than one and a half
orders of magnitude.

\emph{Upper bound (worst-case cost).}  The worst-case convergence bound
degradation $\cosh(L \cdot \beta_1/2)$ grows with $L$.  In the table
above, the growth accelerates: from $L = 2$ to $L = 3$ the worst case
increases by $44\%$ ($1.43 \to 2.06$); from $L = 3$ to $L = 4$ by
$51\%$ ($2.06 \to 3.11$); from $L = 4$ to $L = 5$ by $54\%$ ($3.11
\to 4.80$).  This reflects the transition of $\cosh$ from its quadratic
regime ($\cosh(x) \approx 1 + x^2/2$) to its exponential regime
($\cosh(x) \approx e^x/2$).  At $L = 4$, the argument to $\cosh$ is
$1.8$ and the quadratic approximation $1 + 1.8^2/2 = 2.62$
underestimates the true value $3.107$ by $16\%$.  At $L = 5$, the
argument is $2.25$ and the quadratic approximation gives $3.53$ versus
the true $4.797$, a $26\%$ underestimate.  The quadratic regime, where
$\cosh$ grows gently, is ending.

$L = 4$ is the operating point where the LR range first exceeds one and
a half orders of magnitude (${\times}36.6$) while the worst-case
degradation remains below ${\times}3.2$, moderate enough that a layer
at the boundary is not catastrophically penalised.

\begin{remark}
$L = 3$ ($\cosh(1.35) \approx 2.06$, range ${\times}14.9$) is also
defensible.  $L = 4$ is preferred because it provides $2.5{\times}$
the LR range at only $51\%$ additional worst-case cost, and because
finer granularity allows the system to make smaller distinctions
between layers at different stages of convergence.
\end{remark}

\subsection{Voting Cadence}
\label{app:cadence}

The voting cadence $c$ is the number of epochs between consecutive
voting events.  Unlike the preceding hyperparameters, cadence cannot
be derived from a single mathematical identity: it depends on the
interaction between the voting system, the compound level update rule,
and the training dynamics.  We characterise three structural
constraints that bound the useful range of $c$.

\subsubsection*{Lower bound: EMA convergence}

Vote~2 uses the momentum EMA $m_t = \beta_2 m_{t-1} + (1 - \beta_2)
g_t$.  After a learning rate change alters the gradient distribution, the
EMA requires $\tau = -1/\!\ln\beta_2$ steps to track the new mean
(Lemma~\ref{lem:ema_tau}).  Until the EMA has converged, the
momentum-to-gradient ratio $r$ (Vote~2's input) reflects the old regime
rather than the new one.  The voting window must be long enough that the
EMA statistics used for voting have equilibrated after any level change
from the previous vote.

\subsubsection*{Lower bound: recovery time}

Under the asymmetric update rule, an opposing vote resets the compound
level to zero.  After a reset, the system requires up to $L$
consecutive same-sign votes, i.e.\ $L \cdot c$ epochs, to return to
the boundary.  If $c$ is too small, the system votes before the
gradient statistics have had time to reflect the learning rate change
caused by the previous vote.  The votes then respond to stale
information, producing oscillatory level dynamics.

\subsubsection*{Upper bound: adaptivity}

The system has at most $\lfloor E/c \rfloor$ voting opportunities over
$E$ training epochs, minus one for the cold-start reference.  The
number of voting opportunities must comfortably exceed $2L$ to allow
both upward and downward traversal of the full $[-L, +L]$ range over
the course of training.  At $c = 16$ over $E = 200$ epochs, only $12$
votes occur; with $L = 4$, reaching the boundary and returning to zero
would consume $8$ of them, leaving little room for adaptation to
changing training dynamics.

\subsubsection*{Selection within the range}

The constraints above place the useful range at roughly $c \in [4, 10]$
for typical training lengths ($E \geq 200$ epochs in this work).  Within this
range, smaller cadence gives more frequent adaptation at the cost of
noisier votes and less recovery time. Larger cadence gives cleaner
votes at the cost of fewer total voting opportunities.

\begin{remark}
Cadence is the one hyperparameter in LionVote that is selected by
ablation rather than derived from a mathematical identity.  It is
bounded from below by EMA convergence time and recovery dynamics, and
from above by the need for sufficient voting opportunities.  Within the
feasible range, the choice is empirical: $c = 4$ is a conservative
default and $c = 8$ is preferred when total training epochs permit.
\end{remark}

\subsection{Configuring LionVote}
\label{app:practitioner_recipe}

This subsection consolidates the tunable and non-tunable knobs, their
defaults, and guidance for non-default choices.

\begin{table}[H]
\centering
\small
\caption{LionVote configuration summary.  ``Default'' columns give the
value used in all experiments in this paper.  ``Derivation'' cites the
appendix section that pins the default.}
\label{tab:config_summary}
\begin{tabular}{l c l p{0.40\textwidth}}
\toprule
Knob & Default & Derivation & Practitioner action \\
\midrule
Lion LR & $10^{-4}$ & -- & Use Lion's recommended rate for the workload. \\
Lion $\beta_1$ & $0.9$ & -- & Use Lion default (momentum interp). \\
Lion $\beta_2$ & $0.99$ & -- & Use Lion default (EMA buffer). \\
Weight decay & $0.5$ & -- & Use Lion default for ViT workloads. \\
Cadence $c$ & $8$ & App.~\ref{app:cadence} & If training $< 200$ epochs, use $c = 4$. \\
Max level $L$ & $4$ & App.~\ref{app:maxlevel} & Saturation rate at $L = 4$ is $11.7\%$
   (Table~\ref{tab:compound_distribution}); lowering to $L = 3$ truncates
   ${\sim}12\%$ of decisions. \\
Vote 1 lower & $0$ & App.~\ref{app:vote1} & Do not change. \\
Vote 1 upper & $0.5$ & App.~\ref{app:vote1} & Do not change;
   see Table~\ref{tab:v1_threshold_sensitivity} for perturbation impact on vote composition. \\
Vote 2 factor & $e$ & App.~\ref{app:vote2} & Do not change;
   Vote 2 is near-dormant at this value and that is by design. \\
Tiebreaker & $0.01$ & App.~\ref{app:tiebreaker} & Rescale as
   $\mathrm{CV}_\ell/\sqrt{n}$ for validation set sizes far from the default. \\
LR exponent divisor & $2$ & App.~\ref{app:exponent} & Do not change; defensible on worst-case
   (Table~\ref{tab:divisor_comparison}) and realised-mean
   (Table~\ref{tab:divisor_empirical}) grounds. \\
Asymmetric update rule & enabled & \S\ref{sec:experiments:ablation} &
   Consider the symmetric variant for ViT-style architectures. \\
\bottomrule
\end{tabular}
\end{table}

\paragraph{Per-iteration and memory cost.}
See Appendix~\ref{app:exp:computational_cost}: one additional in-place
accumulation per parameter per batch, $2$~tensors plus $2$~scalars of
extra state per parameter ($3{\times}$ Lion's optimizer-state memory),
and $152$~dot products and norms once every $c$ epochs for ViT-Tiny.
For billion-parameter models the $3{\times}$ state increase should be
weighed against Lion's optimizer-state footprint (${\sim}2{\times}$ the
model parameters).

\paragraph{When to expect benefit.}
LionVote at cadence~$8$ is most likely to help on architectures with
qualitatively distinct layer types (ViT-style).
On uniform CNNs or tasks where Lion is near its ceiling, it provides no
consistent advantage; see practitioner guidance in
\S\ref{sec:analysis:limitations}.  Per-layer adaptation value
increases with architectural heterogeneity
(\S\ref{sec:analysis:arch}).

\paragraph{Diagnostics-only deployment.}
Running LionVote with all votes disabled (and compound levels decaying
to zero) is equivalent to plain Lion.  This makes diagnostics
collection non-invasive: one can deploy LionVote with the voting
thresholds set to $\pm\infty$ (or using a flag) purely to collect
the per-layer gradient-alignment and momentum-ratio statistics, then
decide whether to enable voting.



\section{Experimental Details}
\label{app:exp:hparams}

\subsection{Weight Decay by Configuration}

Per-config, per-optimizer weight decay values.  Lion and LionVote use
identical weight decay; all ablation variants inherit LionVote's values.

\medskip
\begin{center}
\begin{tabular}{lccc}
\toprule
Configuration & AdamW & SGD & Lion / LionVote \\
\midrule
WRN-28-10 / CIFAR-10  & $5 \times 10^{-4}$ & $5 \times 10^{-4}$ & $5 \times 10^{-3}$ \\
WRN-40-10 / CIFAR-100 & $5 \times 10^{-4}$ & $5 \times 10^{-4}$ & $5 \times 10^{-3}$ \\
ViT-Tiny / CIFAR-10   & $5 \times 10^{-2}$ & $5 \times 10^{-4}$ & $5 \times 10^{-1}$ \\
ViT-Tiny / CIFAR-100  & $5 \times 10^{-2}$ & $5 \times 10^{-4}$ & $5 \times 10^{-1}$ \\
\bottomrule
\end{tabular}
\end{center}
\medskip

Lion's weight decay values follow the recommendation of
Chen et al.~\citep{chen2023symbolic} (Section~5): $3$--$10{\times}$
larger WD than AdamW, compensating for Lion's $3$--$10{\times}$
smaller learning rate.  AdamW uses PyTorch's built-in decoupled weight
decay; SGD uses L2 regularisation; Lion and LionVote apply decoupled
weight decay manually in the update step.

\paragraph{Augmentation note.}
CNN configurations on CIFAR-100 use the CIFAR-10 AutoAugment
policy.  PyTorch's \texttt{AutoAugmentPolicy.CIFAR10} is the only
CIFAR policy provided in \texttt{torchvision}; no separate CIFAR-100
policy exists.  Using the CIFAR-10 policy for both datasets is standard
practice in the CIFAR benchmark literature.

\subsection{Full Convergence Threshold Tables}
\label{app:exp:convergence}

Table~\ref{tab:convergence} provides selected convergence thresholds
for all configurations.

\begin{table}[H]
\centering
\caption{Epochs to reach accuracy thresholds (mean over seeds that
reached the threshold; only entries where all 8 seeds reached are
shown).  ``---'' = not reliably reached.}
\label{tab:convergence}
\small
\begin{tabular}{l l c c c c c}
\toprule
Configuration & Threshold & SGD & AdamW & Lion & LionVote (c4) & LionVote (c8) \\
\midrule
WRN-28-10 / C10   & $\to$91\% & 110 & 56 & 61 & \textbf{41} & 71 \\
                   & $\to$92\% & 123 & \textbf{76} & 82 & \textbf{76} & --- \\
                   & $\to$95\% & \textbf{158} & --- & --- & --- & --- \\
\midrule
WRN-40-10 / C100  & $\to$76\% & 143 & \textbf{119} & --- & --- & --- \\
                   & $\to$80\% & \textbf{169} & --- & --- & --- & --- \\
\midrule
ViT-Tiny / C10    & $\to$89\% & --- & 180 & 145 & 155 & \textbf{136} \\
                   & $\to$90\% & --- & 209 & \textbf{173} & 208 & \textbf{173} \\
\midrule
ViT-Tiny / C100   & $\to$67\% & --- & 214 & 199 & 146 & \textbf{133} \\
                   & $\to$68\% & --- & 250 & 239 & 193 & \textbf{160} \\
\bottomrule
\end{tabular}
\end{table}

Expanded per-configuration convergence tables follow (mean over 8
seeds; ``---'' = fewer than 8 seeds reached the threshold).

\paragraph{WRN-28-10 / CIFAR-10.}
\medskip
\begin{center}
\small
\begin{tabular}{l c c c c c c}
\toprule
Setup & $\to$88\% & $\to$89\% & $\to$90\% & $\to$91\% & $\to$92\% & $\to$93\% \\
\midrule
SGD+cos              &  64 &  84 &  95 & 110 & 123 & \textbf{136} \\
AdamW+cos            &  30 &  38 &  45 &  56 &  \textbf{76} & --- \\
Lion+cos             &  33 &  36 &  49 &  61 &  82 & --- \\
LionVote+cos         &  \textbf{27} &  \textbf{31} &  \textbf{34} &  \textbf{41} &  \textbf{76} & --- \\
LionVote+cos (c8)    &  33 &  40 &  45 &  71 & --- & --- \\
LionVote+cos (c6)    &  31 &  34 &  40 &  53 &  92 & --- \\
\bottomrule
\end{tabular}
\end{center}

\paragraph{WRN-40-10 / CIFAR-100.}
\medskip
\begin{center}
\small
\begin{tabular}{l c c c c c c}
\toprule
Setup & $\to$72\% & $\to$73\% & $\to$74\% & $\to$75\% & $\to$76\% & $\to$77\% \\
\midrule
SGD+cos              & 114 & 123 & 128 & 137 & 143 & \textbf{150} \\
AdamW+cos            &  58 &  66 &  79 &  98 & \textbf{119} & --- \\
Lion+cos             &  81 &  90 & 109 & 136 & --- & --- \\
LionVote+cos         &  \textbf{51} &  \textbf{61} &  \textbf{70} & --- & --- & --- \\
LionVote+cos (c6)    &  56 &  66 &  77 &  \textbf{93} & --- & --- \\
LionVote+cos (c8)    &  73 &  80 & 101 & --- & --- & --- \\
\bottomrule
\end{tabular}
\end{center}

\paragraph{ViT-Tiny / CIFAR-10.}
\medskip
\begin{center}
\small
\begin{tabular}{l c c c c c c}
\toprule
Setup & $\to$87\% & $\to$88\% & $\to$89\% & $\to$90\% & $\to$91\% & $\to$92\% \\
\midrule
Lion+cos+warm            & 105 & 121 & 145 & \textbf{173} & \textbf{211} & \textbf{257} \\
AdamW+cos+warm           & 137 & 161 & 180 & 209 & 249 & --- \\
LionVote+cos+warm (c8)   & 102 & \textbf{114} & \textbf{136} & \textbf{173} & 233 & --- \\
LionVote+cos+warm (c6)   &  \textbf{95} & 117 & 145 & 179 & --- & --- \\
LionVote+cos+warm        & 101 & 120 & 155 & 208 & --- & --- \\
SGD+cos+warm             & --- & --- & --- & --- & --- & --- \\
\bottomrule
\end{tabular}
\end{center}

\paragraph{ViT-Tiny / CIFAR-100.}
\medskip
\begin{center}
\small
\begin{tabular}{l c c c c c c}
\toprule
Setup & $\to$63\% & $\to$64\% & $\to$65\% & $\to$66\% & $\to$67\% & $\to$68\% \\
\midrule
LionVote+cos+warm (c8)  &  76 & 86 & 100 & \textbf{110} & \textbf{133} & \textbf{160} \\
LionVote+cos+warm (c6)  &  76 &  88 &  \textbf{98} & 114 & 135 & --- \\
Lion+cos+warm            & 85 & 104 & 130 & 158 & 199 & 239 \\
LionVote+cos+warm        &  \textbf{71} &  \textbf{80} &  \textbf{98} & 114 & 146 & 193 \\
AdamW+cos+warm           & 121 & 140 & 158 & 183 & 214 & 250 \\
SGD+cos+warm             & --- & --- & --- & --- & --- & --- \\
\bottomrule
\end{tabular}
\end{center}

\subsection{Cadence and Maximum Level Sensitivity}
\label{app:exp:cadence}

Best top-1 accuracy (\%) on WRN-28-10/CIFAR-10 with cosine schedule,
varying cadence $c$ and maximum level $L$ (8 seeds each).

\medskip
\begin{center}
\begin{tabular}{l c c c c}
\toprule
 & $c = 2$ & $c = 4$ & $c = 6$ & $c = 8$ \\
\midrule
$L = 2$ & 92.49{\scriptsize$\pm$0.79} & 92.95{\scriptsize$\pm$0.48} & 92.87{\scriptsize$\pm$0.62} & \textbf{93.47}{\scriptsize$\pm$0.74} \\
$L = 4$ & 91.22{\scriptsize$\pm$0.87} & 93.14{\scriptsize$\pm$0.38} & 93.03{\scriptsize$\pm$0.77} & 93.30{\scriptsize$\pm$0.82} \\
$L = 6$ & 90.98{\scriptsize$\pm$0.83} & 92.97{\scriptsize$\pm$0.63} & 92.72{\scriptsize$\pm$0.59} & 92.90{\scriptsize$\pm$1.00} \\
\bottomrule
\end{tabular}
\end{center}
\medskip

$c = 2$ is consistently worst, producing noisy votes before gradient
statistics equilibrate.  Within the $c \in [4, 8]$ range, accuracy is
relatively stable.  The best single entry ($93.47\%$ at $c{=}8$,
$L{=}2$) suggests that a small level range with clean votes can
outperform a large range with noisier ones.  Cadence~8 is preferred
when total training epochs ($\ge 200$) provide sufficient voting
opportunities.

\paragraph{ViT-Tiny/CIFAR-100 cadence sensitivity.}
Best top-1 accuracy (\%) on ViT-Tiny/CIFAR-100 with cosine warmup
schedule, $L = 4$ (8 seeds each).

\medskip
\begin{center}
\begin{tabular}{l c c c}
\toprule
 & $c = 4$ & $c = 6$ & $c = 8$ \\
\midrule
$L = 4$ & 68.90{\scriptsize$\pm$0.48} & 69.23{\scriptsize$\pm$0.67} & \textbf{69.71}{\scriptsize$\pm$0.60} \\
\bottomrule
\end{tabular}
\end{center}
\medskip

The cadence effect is larger on ViT-Tiny/CIFAR-100 than on
WRN-28-10/CIFAR-10: the gap between cadence~4 and cadence~8 is
$0.81$~pp (vs.\ $0.16$~pp on WRN at $L{=}4$).  This is consistent
with the analysis in \S\ref{sec:analysis:arch}: ViT's greater
layer-type heterogeneity makes Vote~1's gradient alignment estimates
more consequential.

\subsection{Threshold Robustness (Counterfactual Voting)}
\label{app:threshold_robustness}

The cadence~$c$ and maximum level~$L$ are swept empirically in
Appendix~\ref{app:exp:cadence}.  The \emph{inner} thresholds
(Vote~1 upper $0.5$, Vote~2 factor $e$, divisor $2$, tiebreaker $0.01$)
are derived from the conventions in
Appendix~\ref{app:vote1}--\ref{app:tiebreaker}; Vote~1's lower boundary
$0$ is the parameter-free geometric identity and is not tunable.  This
subsection reports post-hoc \emph{counterfactual} sensitivity: how the
vote composition would change under alternative thresholds, computed by
replaying LionVote's voting logic against the recorded
per-parameter gradient statistics from a plain-Lion training run
(Appendix~\ref{app:exp:raw_diagnostics}, seed~2, 36~measurement epochs,
152~parameters, $n = 5472$).  The replay characterises the threshold's
\emph{operating regime} but does not predict accuracy under retraining
at a different threshold; that would require fresh training runs, which
are outside the scope of this study.

\paragraph{Vote 1 upper threshold.}
Table~\ref{tab:v1_threshold_sensitivity} reports the vote
composition as the upper threshold sweeps across $[0.30, 0.70]$
with the lower threshold fixed at $0$.

\begin{table}[H]
\centering
\small
\caption{Vote~1 composition under alternative upper thresholds
(ViT-Tiny/CIFAR-100, Lion baseline, seed~2, $n = 5472$).}
\label{tab:v1_threshold_sensitivity}
\begin{tabular}{c c c c}
\toprule
Upper threshold & $+1$ rate & $0$ rate & $-1$ rate \\
\midrule
$0.30$ & $9.45\%$  & $53.00\%$ & $37.55\%$ \\
$0.40$ & $5.65\%$  & $56.80\%$ & $37.55\%$ \\
$0.45$ & $3.93\%$  & $58.52\%$ & $37.55\%$ \\
$\mathbf{0.50}$ (default) & $\mathbf{2.47\%}$ & $\mathbf{59.98\%}$ & $\mathbf{37.55\%}$ \\
$0.55$ & $1.30\%$  & $61.15\%$ & $37.55\%$ \\
$0.60$ & $0.80\%$  & $61.64\%$ & $37.55\%$ \\
$0.70$ & $0.35\%$  & $62.10\%$ & $37.55\%$ \\
\bottomrule
\end{tabular}
\end{table}

The $-1$~rate is fixed by the lower threshold $0$; perturbing the upper
threshold redistributes mass between $0$ and $+1$.  At the default
$0.50$, the $+1$ rate is already a minority behaviour on Lion
($2.47\%$); a $\pm 0.1$ perturbation changes the $+1$ rate by roughly a
factor of two but leaves the dominant $-1$ and abstain rates within
$2$~percentage points of baseline.  The method does not sit on a phase
transition in this regime.

\paragraph{Vote 2 factor.}
Table~\ref{tab:v2_threshold_sensitivity} sweeps the symmetric factor
$k$ (vote $+1$ if $r < 1/k$, $-1$ if $r > k$) from $1.25$ to $4.0$;
the default is $k = e \approx 2.718$.

\begin{table}[H]
\centering
\small
\caption{Vote~2 composition under alternative symmetric
factors~$k$ (same data as Table~\ref{tab:v1_threshold_sensitivity}).}
\label{tab:v2_threshold_sensitivity}
\begin{tabular}{c c c c}
\toprule
$k$ & $+1$ rate & $0$ rate & $-1$ rate \\
\midrule
$1.25$ & $0.00\%$ & $1.06\%$  & $98.94\%$ \\
$1.50$ & $0.00\%$ & $12.99\%$ & $87.01\%$ \\
$2.00$ & $0.00\%$ & $68.57\%$ & $31.43\%$ \\
$2.50$ & $0.00\%$ & $94.28\%$ & $5.72\%$  \\
$\mathbf{e \approx 2.718}$ (default) & $\mathbf{0.00\%}$ & $\mathbf{97.57\%}$ & $\mathbf{2.43\%}$ \\
$3.00$ & $0.00\%$ & $99.21\%$ & $0.79\%$  \\
$3.50$ & $0.00\%$ & $99.78\%$ & $0.22\%$  \\
$4.00$ & $0.00\%$ & $99.93\%$ & $0.07\%$  \\
\bottomrule
\end{tabular}
\end{table}

Three observations.  First, $+1$ rate is zero for every $k$ tested:
the momentum norm never falls below the gradient norm by a factor
$k$ on this workload.  The asymmetry is a property of Lion's training
dynamics on ViT-Tiny, not of the threshold.
Second, Vote~2 sits on a steep transition between $k = 1.5$
(fires $-1$ on $87\%$ of observations) and $k = e$ (fires on
$2.4\%$).  The default $k = e$ lands in a near-inactive regime,
which is consistent with the v2only ablation
(Appendix~\ref{app:exp:v2only}).  A choice of $k$ between $2.0$ and
$e$ would make Vote~2 a materially more active contributor; whether
this improves accuracy is not determined by counterfactual replay.
Third, the EMA time-constant derivation
(Appendix~\ref{app:vote2}, Lemma~\ref{lem:ema_tau}) pins $k$ to $e$
via $\beta_2^{\tau} = e^{-1}$; any alternate choice would need a
corresponding derivation.  At the derived value, Vote~2 behaves
conservatively, consistent with its role as a second diagnostic
rather than a primary driver.

\paragraph{Divisor.}
The existing Table~\ref{tab:divisor_comparison}
(Appendix~\ref{app:exponent}) gives the \emph{worst-case} per-level
convergence degradation $\cosh(4\beta_1/d)$ abstractly.  We complement
this with the \emph{empirically realised mean} degradation
$\mathbb{E}_s[\cosh(s \beta_1 / d)]$, averaged over the compound
level distribution observed across all 8 seeds of cadence-8 LionVote on
ViT-Tiny/CIFAR-100 ($n = 44992$ per-voting-epoch observations,
Table~\ref{tab:compound_distribution}).

\begin{table}[H]
\centering
\small
\caption{Realised mean convergence-bound degradation by divisor
($\beta_1 = 0.9$, compound levels from 8-seed pooled cadence-8 run).
The realised mean is a weighted average of $\cosh(s \beta_1 / d)$
using the empirical distribution over~$s$
(Table~\ref{tab:compound_distribution}).}
\label{tab:divisor_empirical}
\begin{tabular}{c c c}
\toprule
Divisor $d$ & Realised mean $\mathbb{E}_s[\cosh(s\beta_1/d)]$
  & Worst case $\cosh(4\beta_1/d)$ \\
\midrule
$1.5$ & $1.96\;(+96\%)$ & $5.56$ \\
$\mathbf{2.0}$ (default) & $\mathbf{1.47\;(+47\%)}$ & $\mathbf{3.11}$ \\
$2.5$ & $1.28\;(+28\%)$ & $2.23$ \\
$3.0$ & $1.19\;(+19\%)$ & $1.81$ \\
\bottomrule
\end{tabular}
\end{table}

\begin{table}[H]
\centering
\small
\caption{Empirical distribution of compound level $s$ across all
8 seeds of cadence-8 LionVote on ViT-Tiny/CIFAR-100
($n = 44992$ per-voting-epoch observations).}
\label{tab:compound_distribution}
\begin{tabular}{r c c c c c c c c c}
\toprule
$s$
  & $-4$ & $-3$ & $-2$ & $-1$ & $0$ & $+1$ & $+2$ & $+3$ & $+4$ \\
\midrule
Fraction
  & $9.63\%$ & $9.33\%$ & $12.34\%$ & $19.17\%$
  & $32.24\%$ & $9.31\%$ & $3.85\%$ & $2.04\%$ & $2.08\%$ \\
\bottomrule
\end{tabular}
\end{table}

At $d = 1.5$ the realised mean degradation is ${\sim}1.3{\times}$ that
at $d = 2$; this quantifies, on realised rather than worst-case levels,
the claim in Appendix~\ref{app:exponent} that $d = 2$ sits near
the centre of a defensible interval.  At $d = 3$, the per-level step
is small enough ($1.35{\times}$ per level) that the mechanism has
limited expressivity in the observed level range.  The
$|s| = 4$ saturation rate of $11.7\%$ indicates that the default
$L = 4$ is actively used, and lowering to $L = 3$ would truncate
approximately $12\%$ of per-voting-epoch decisions.

\paragraph{Tiebreaker threshold.}
The $1\%$ threshold is calibrated to the per-validation-sample
coefficient of variation at typical validation set sizes
(Appendix~\ref{app:tiebreaker}), and is explicitly described there as
``determined to order of magnitude rather than uniquely''.  We do not
sweep this threshold because the tiebreaker produces a single global
scalar per voting epoch ($37$ decisions at cadence~8), so the
counterfactual-vote-composition exercise used above for Votes~1 and~2
would not be informative at the per-parameter level.

\paragraph{What this does and does not show.}
Counterfactual vote composition characterises the threshold's
\emph{operating regime} -- whether the method sits near a phase
transition, and which vote components are active versus dormant.
It does \emph{not} predict accuracy under retraining at alternate
thresholds.  Vote~1 appears robust near its default; Vote~2 at the
derived $k = e$ is near-dormant on this workload
(consistent with the v2only ablation); the divisor $d = 2$ is
defensible on both worst-case and realised-mean grounds.

\subsection{Vote 2 Only: Compound Level Analysis}
\label{app:exp:v2only}

The v2only variant disables Vote~1 and the tiebreaker, leaving only
the momentum health vote.  On ViT-Tiny/CIFAR-100 (seed~2, cadence~4),
compound level statistics at each voting epoch show that Vote~2
rarely fires:

\medskip
\begin{center}
\small
\begin{tabular}{r r r r r r r}
\toprule
Epoch & Mean & Min & Max & \#pos & \#neg & \#zero \\
\midrule
4   & $-0.007$ & $-1$ & 0 & 0 &   1 & 151 \\
64  & $-0.086$ & $-2$ & 0 & 0 &  10 & 142 \\
124 & $-0.086$ & $-3$ & 0 & 0 &  10 & 142 \\
184 & $-0.053$ & $-2$ & 0 & 0 &   7 & 145 \\
244 & $-0.046$ & $-3$ & 0 & 0 &   5 & 147 \\
300 & $+0.000$ &   0  & 0 & 0 &   0 & 152 \\
\bottomrule
\end{tabular}
\end{center}
\medskip

At every epoch, ${\ge}89\%$ of parameters remain at level~0.  No
parameter ever reaches a positive level.  The momentum-to-gradient
ratio $r = \|m\| / \|\bar{g}\|$ stays within the dead zone
$[1/e, e]$ for the vast majority of parameters, consistent with the
EMA operating near stationarity.  v2only approximates Lion because
Vote~2 rarely fires under these training dynamics
(\S\ref{sec:experiments:ablation}).

\subsection{Vote Dynamics Over Training}
\label{app:vote_dynamics}

Appendix~\ref{app:exp:v2only} established that Vote~2 rarely fires.
This subsection characterises the joint firing behaviour of Votes~1
and~2 across training, including the resulting tiebreaker activation
rate.  Data are from the plain-Lion diagnostic run used in
Appendix~\ref{app:exp:raw_diagnostics}
($n = 5472$: $36$ voting epochs $\times$ $152$ parameters,
ViT-Tiny/CIFAR-100, seed~2, cadence~8).

\paragraph{Joint vote distribution.}
Applying the paper's default thresholds
($\cos > 0.5$, $r > e$ or $r < 1/e$) to the recorded diagnostics
yields the joint distribution in
Table~\ref{tab:vote_cross_tab}.

\begin{table}[H]
\centering
\small
\caption{Joint distribution of Vote~1 and Vote~2 outcomes under
default thresholds (Lion baseline, ViT-Tiny/CIFAR-100, seed~2,
$n = 5472$).}
\label{tab:vote_cross_tab}
\begin{tabular}{c c c c c}
\toprule
 & $V_2 = -1$ & $V_2 = 0$ & $V_2 = +1$ & row total \\
\midrule
$V_1 = -1$ & $1.06\%$ & $36.49\%$ & $0.00\%$ & $37.55\%$ \\
$V_1 =  0$ & $1.33\%$ & $58.64\%$ & $0.00\%$ & $59.98\%$ \\
$V_1 = +1$ & $0.04\%$ & $2.43\%$ & $0.00\%$ & $2.47\%$ \\
\midrule
col.\ total & $2.43\%$ & $97.57\%$ & $0.00\%$ & \\
\bottomrule
\end{tabular}
\end{table}

Three facts stand out.  First, $V_2 = +1$ never fires
(momentum norm never falls below gradient norm by a factor of~$e$
on any observation).  Second, $V_1$ and $V_2$ never take strictly
opposite non-zero values in the dominant sense
($(V_1, V_2) = (-1, +1)$ is $0\%$, and
$(V_1, V_2) = (+1, -1)$ is $0.04\%$ -- two observations).
Third, and most consequentially,
$V_1 + V_2 = 0$ holds for $58.68\%$ of per-parameter decisions --
overwhelmingly via the $(0, 0)$ cell ($58.64\%$), with active
disagreement contributing only $0.04\%$.  On this workload, the
tiebreaker is therefore eligible for a majority of per-parameter
voting decisions, and Vote~1 alone determines the outcome for the
remaining $41\%$ (Vote~2 is decisive in isolation only when $V_1$
abstains and $V_2$ fires, which is $1.33\%$).

\paragraph{Per-voting-epoch firing.}
Table~\ref{tab:vote_time_series} sweeps $V_1$ and $V_2$ firing
rates across training at nine representative epochs (full $36$-epoch
series available in supplementary CSV).

\begin{table}[H]
\centering
\small
\caption{Per-voting-epoch firing rates, default thresholds
(Lion baseline, seed~2, ViT-Tiny/CIFAR-100; $152$ parameter tensors
per row).  Early epochs: $V_1 = 0$ dominates; late epochs:
$V_1 = -1$ dominates.  $V_2$ fires sporadically throughout with
no monotonic trend.}
\label{tab:vote_time_series}
\begin{tabular}{r c c c c}
\toprule
Epoch & $V_1 = +1$ & $V_1 = 0$ & $V_1 = -1$ & $V_2 = -1$ \\
\midrule
$16$  & $4.61\%$  & $75.66\%$ & $19.74\%$ & $0.00\%$ \\
$48$  & $2.63\%$  & $68.42\%$ & $28.95\%$ & $3.29\%$ \\
$96$  & $2.63\%$  & $73.68\%$ & $23.68\%$ & $6.58\%$ \\
$144$ & $5.26\%$  & $53.95\%$ & $40.79\%$ & $0.66\%$ \\
$192$ & $2.63\%$  & $55.92\%$ & $41.45\%$ & $7.24\%$ \\
$216$ & $1.32\%$  & $44.08\%$ & $54.61\%$ & $3.29\%$ \\
$248$ & $0.66\%$  & $46.71\%$ & $52.63\%$ & $2.63\%$ \\
$272$ & $1.32\%$  & $45.39\%$ & $53.29\%$ & $2.63\%$ \\
$296$ & $0.66\%$  & $43.42\%$ & $55.92\%$ & $0.00\%$ \\
\bottomrule
\end{tabular}
\end{table}

$V_1 = -1$ (reversal detection) rises nearly monotonically from
$20\%$ in early training to $56\%$ by epoch~$296$: as training
progresses, epoch-to-epoch gradient direction stability decreases
for an increasing fraction of layers.  $V_1 = +1$ (strong
stability) correspondingly decays from $5\%$ to $1\%$.  $V_2$ shows
no monotonic trend; its firings appear episodically at
single-digit rates.  This profile is what drives the downward
level migration visible in the stacked area plot
(Figure~\ref{fig:level_flow}).

\paragraph{Relation to level changes.}
Combined with the compound level dynamics (Figure~\ref{fig:compound},
Table~\ref{tab:compound_vit}), the picture is: the dominant driver of
negative compound level accumulation is Vote~1's growing reversal
detection rate, moderated by the tiebreaker whenever both local
votes abstain ($58.6\%$ of decisions) -- typically resolving in
favour of rate maintenance or reduction as validation loss ceases
to improve monotonically.

\subsection{Additional Compound Level Trajectories}
\label{app:exp:compound}

\paragraph{ViT-Tiny/CIFAR-100, cadence 8.}
\label{tab:c8_vit}

\medskip
\begin{center}
\small
\begin{tabular}{l r r r r r r r}
\toprule
Layer type & ep8 & ep64 & ep120 & ep176 & ep232 & ep296 & Mult.\ \\
\midrule
all\_attn       & $-0.35$ & $+0.06$ & $-1.22$ & $-1.75$ & $-2.13$ & $-2.06$ & $0.40\times$ \\
all\_mlp        & $-0.35$ & $+0.12$ & $-0.83$ & $-1.55$ & $-1.98$ & $-1.98$ & $0.41\times$ \\
all\_norm       & $-0.48$ & $+0.11$ & $-0.36$ & $-0.53$ & $-1.22$ & $-1.43$ & $0.53\times$ \\
embed+patch     & $-0.28$ & $+0.12$ & $-1.28$ & $-1.19$ & $-2.19$ & $-2.06$ & $0.40\times$ \\
head+final\_norm & $-0.25$ & $+0.88$ & $-0.31$ & $-0.50$ & $-1.44$ & $-1.19$ & $0.59\times$ \\
\bottomrule
\end{tabular}
\end{center}
\medskip

At cadence~8, the early-training trajectory differs: levels are near
zero or slightly positive at epoch~64, indicating the system initially
finds no strong signal for rate reduction with the longer voting
window.  By mid-training (epoch~120), differentiation emerges, and
by epoch~296 the attention-vs-normalisation spread ($0.63$ levels,
LR ratio $1.33$) is comparable to cadence~4 ($0.61$ levels, ratio
$1.32$).

\paragraph{WRN-28-10/CIFAR-10, cadence 4.}

\medskip
\begin{center}
\small
\begin{tabular}{l r r r r r r}
\toprule
Layer group & ep4 & ep44 & ep84 & ep124 & ep164 & ep200 \\
\midrule
conv0     & $-0.12$ & $-3.12$ & $-3.00$ & $-0.88$ & $-1.50$ & $-1.00$ \\
layer1    & $+0.05$ & $-2.37$ & $-2.38$ & $-2.13$ & $-2.00$ & $-1.68$ \\
layer2    & $-0.02$ & $-2.41$ & $-2.67$ & $-2.13$ & $-1.83$ & $-1.62$ \\
layer3    & $-0.06$ & $-1.98$ & $-2.54$ & $-1.20$ & $-1.48$ & $-1.84$ \\
fc        & $-0.50$ & $-3.38$ & $-2.06$ & $-2.12$ & $-1.31$ & $-1.75$ \\
final\_bn & $+0.31$ & $-3.00$ & $-2.00$ & $-1.75$ & $-1.19$ & $-1.81$ \\
\bottomrule
\end{tabular}
\end{center}
\medskip

The three repeated block groups (layer1, layer2, layer3) show a spread
of $0.22$ levels at epoch~200, compared to $0.61$ levels between
attention and normalisation on ViT-Tiny.  The smaller spread reflects
the architectural uniformity of WideResNet's residual groups.

\subsection{Effective Learning Rate Fingerprint}
\label{app:exp:lr_fingerprint}

\begin{figure}[h]
\centering
\includegraphics[width=0.55\textwidth]{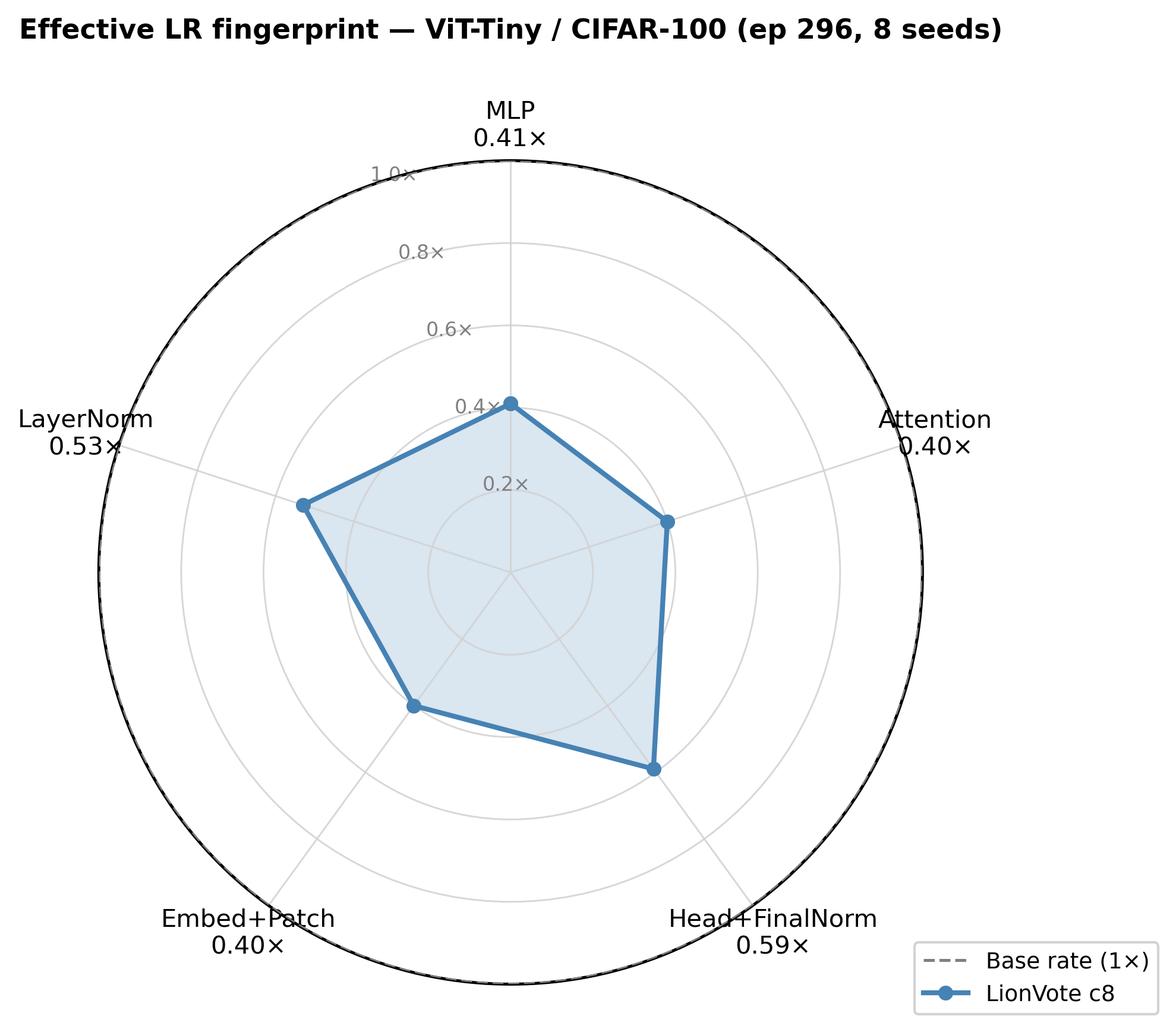}
\caption{Effective LR+WD fingerprint for LionVote (cadence~8)
on ViT-Tiny/CIFAR-100 at epoch~296 (8-seed average).  Each axis shows
the compound multiplier $\alpha_i/\mathrm{lr}$ as a fraction of the
base rate.  The dashed circle marks the base rate ($1{\times}$).  Head
and final normalisation layers retain the highest multiplier
($0.59{\times}$, from compound level $-1.19$, Table~\ref{tab:c8_vit});
attention, MLP, and embedding layers are reduced to ${\sim}0.40{\times}$.}
\label{fig:radar}
\end{figure}

\subsection{Compound Level Mass Flow}
\label{app:exp:level_flow}

\begin{figure}[H]
\centering
\includegraphics[width=\textwidth]{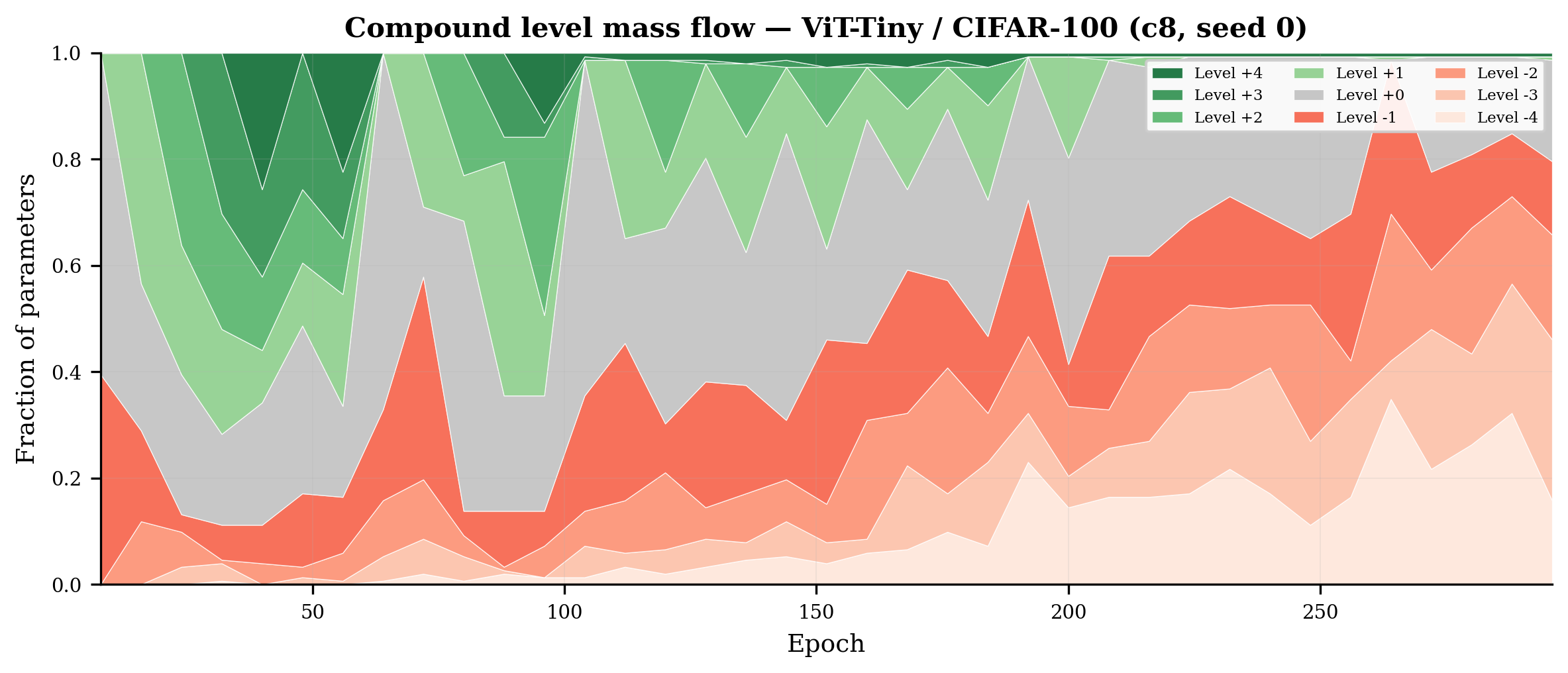}
\caption{Stacked area plot showing the fraction of parameters at each
compound level over training (ViT-Tiny/CIFAR-100, cadence~8, seed~2).
Green shades indicate positive levels; red shades indicate negative
levels; grey is level~0.  The mass migrates from positive/neutral
levels early in training toward increasingly negative levels, with
most parameters settling between levels $-1$ and $-3$ by epoch~250.}
\label{fig:level_flow}
\end{figure}

\subsection{Cadence Effect on Layer-Type Differentiation}
\label{app:exp:cadence_comparison}

\begin{figure}[H]
\centering
\includegraphics[width=\textwidth]{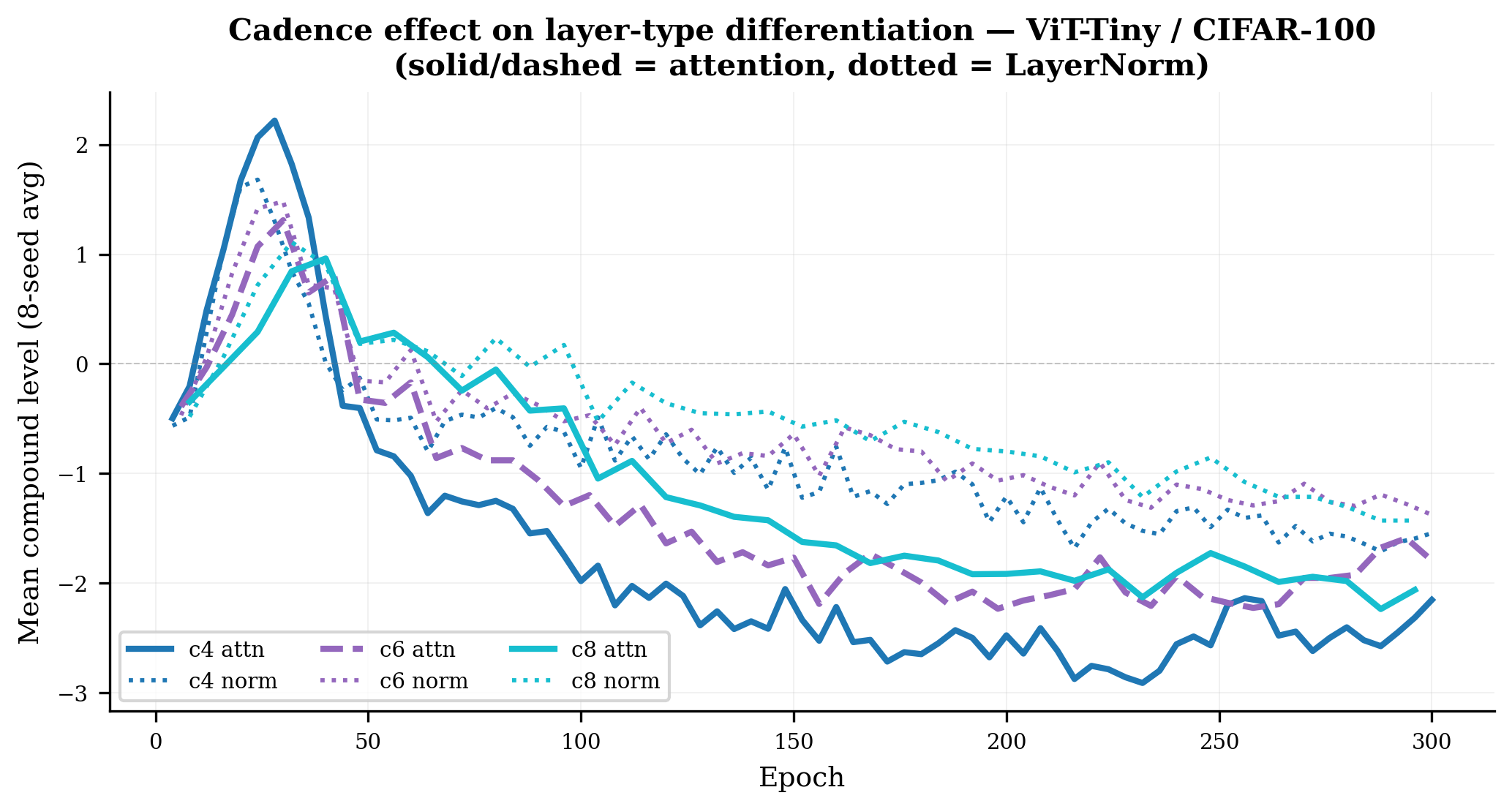}
\caption{Mean compound level for attention (solid/dashed) and
LayerNorm (dotted) parameters across cadences $c \in \{4, 6, 8\}$ on
ViT-Tiny/CIFAR-100 (8-seed average).  Lower cadence produces faster
initial differentiation but noisier long-term trajectories.  All
cadences converge to similar attention--normalisation spreads by
epoch~250, though $c{=}4$ overshoots early in training.}
\label{fig:cadence_comparison}
\end{figure}

\subsection{Effective Learning Rate Dynamics}
\label{app:exp:effective_lr}

\begin{figure}[H]
\centering
\includegraphics[width=\textwidth]{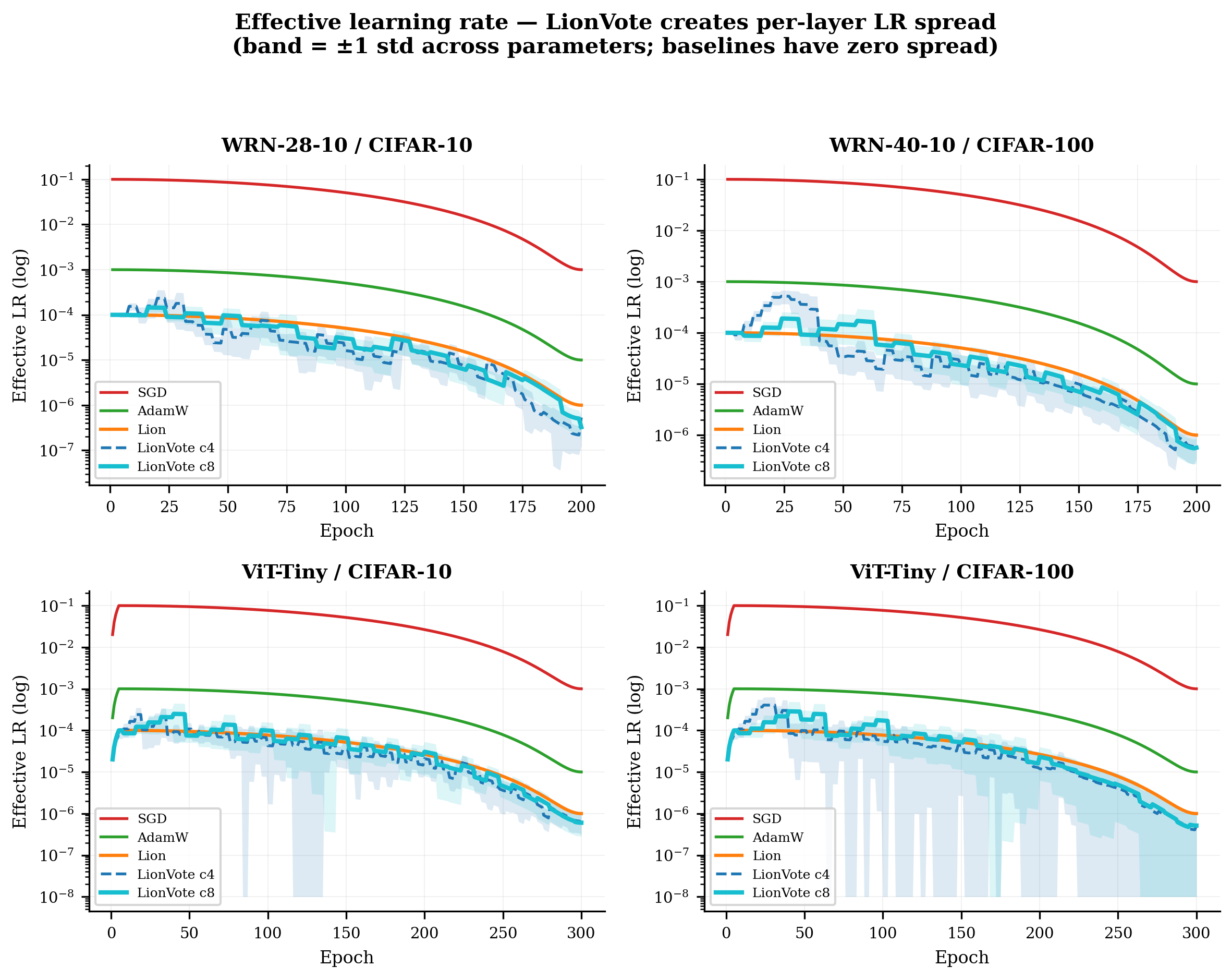}
\caption{Effective learning rate (log scale) for all configurations
and optimizers.  Bands for LionVote show $\pm 1$ std across layers within each seed, reflecting per-layer LR spread; baselines have zero
spread (single global rate).  LionVote's per-layer rates span
approximately one order of magnitude, with the spread widening as
training progresses.}
\label{fig:effective_lr}
\end{figure}

\subsection{Training Loss Curves}
\label{app:exp:Tloss_curve}

\begin{figure}[H]
\centering
\includegraphics[width=\textwidth]{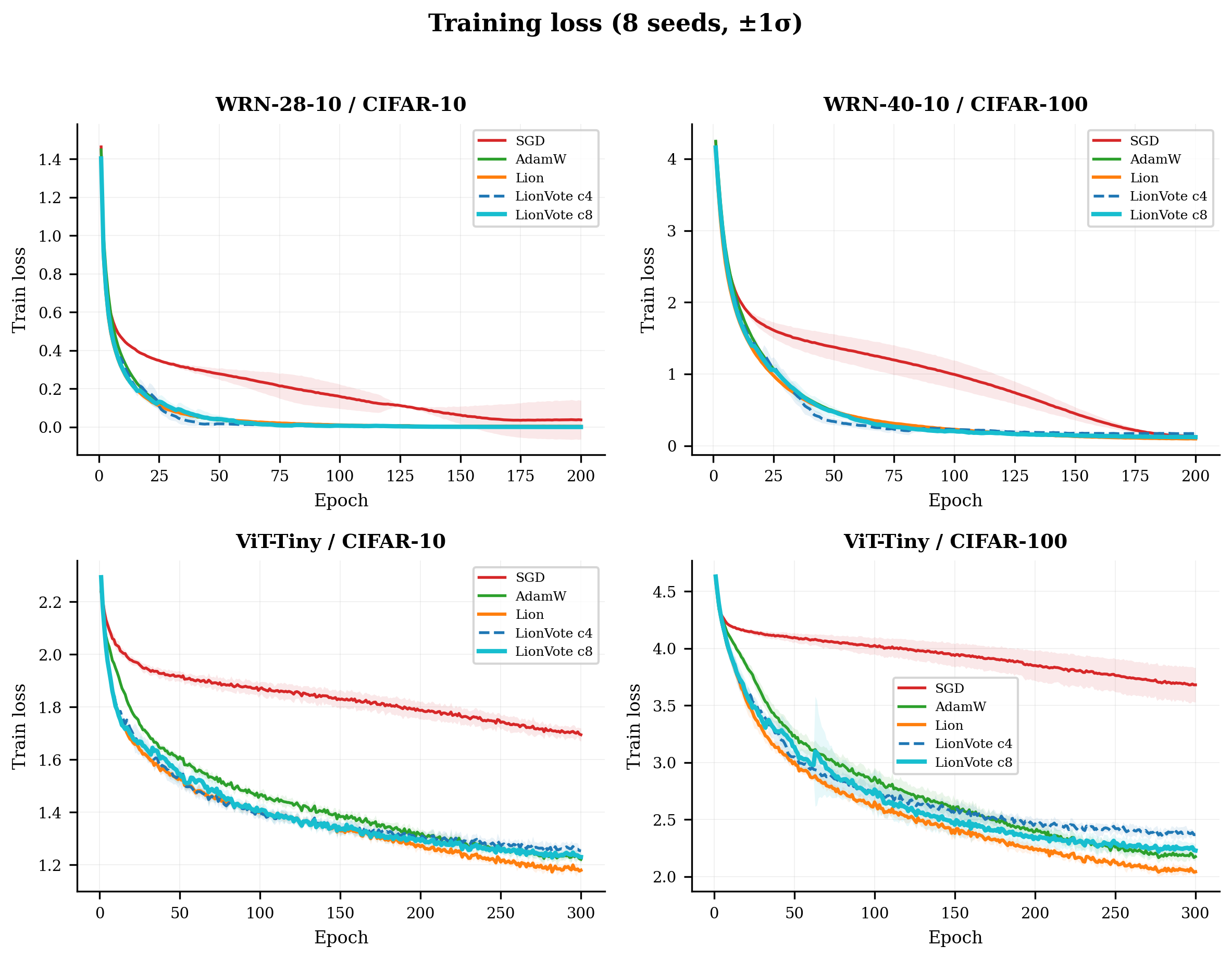}
\caption{Training loss for all configurations (mean $\pm 1$ std, 8
seeds).  Adaptive optimizers (AdamW, Lion, LionVote) converge
substantially faster than SGD on ViT-Tiny.  On WRN configurations,
SGD achieves the lowest final training loss despite slower initial
descent.}
\label{fig:train_loss}
\end{figure}

\subsection{Validation Loss Curves}
\label{app:exp:Vloss_curves}

\begin{figure}[H]
\centering
\includegraphics[width=\textwidth]{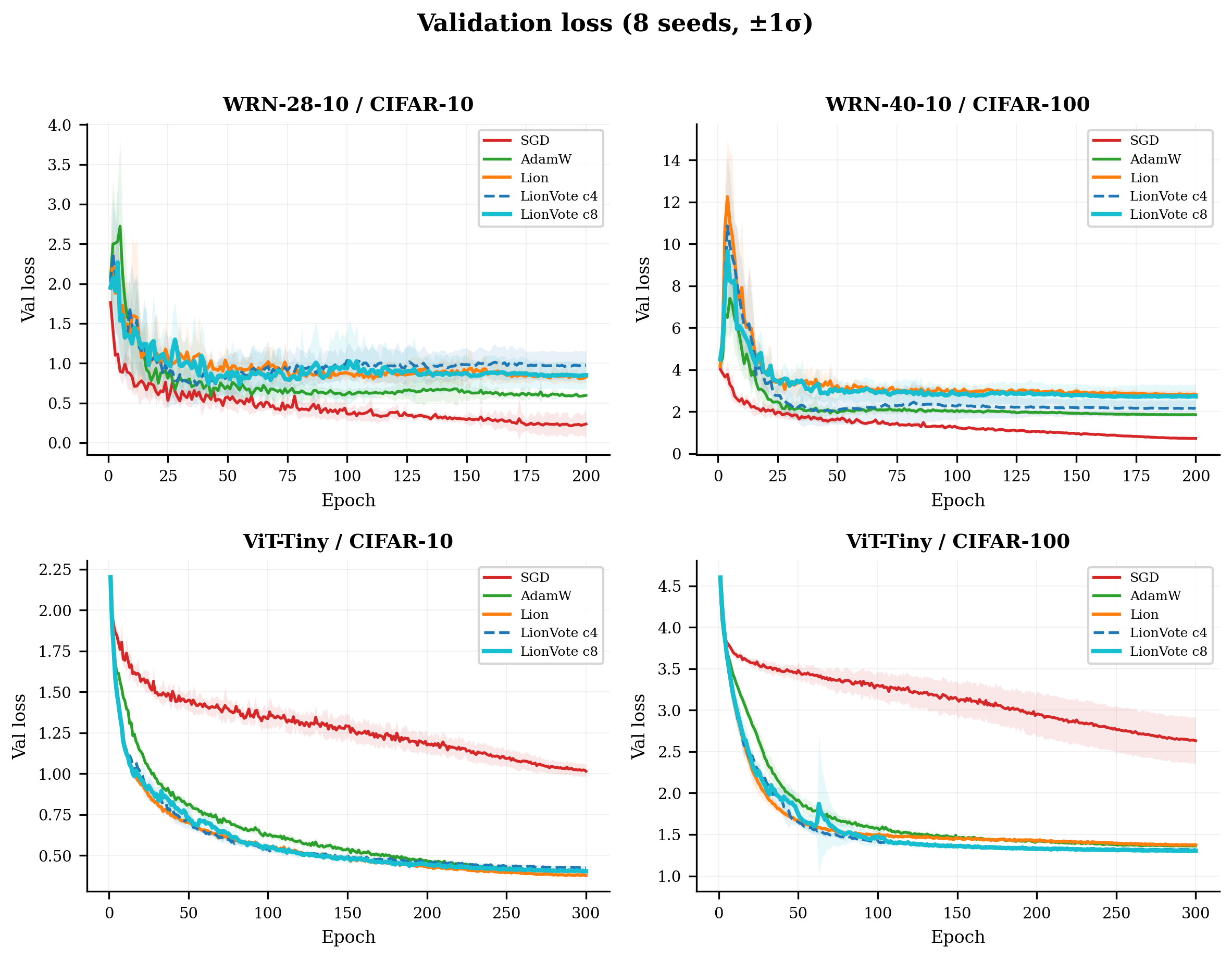}
\caption{Validation loss for all configurations (mean $\pm 1$ std, 8
seeds).  On WRN configurations, SGD achieves the lowest validation
loss.  On ViT-Tiny, all adaptive methods converge to similar
validation loss, with LionVote showing slightly faster early descent
on CIFAR-100.}
\label{fig:val_loss}
\end{figure}

\subsection{Generalisation Gap}
\label{app:exp:gen_gap}

\begin{figure}[H]
\centering
\includegraphics[width=\textwidth]{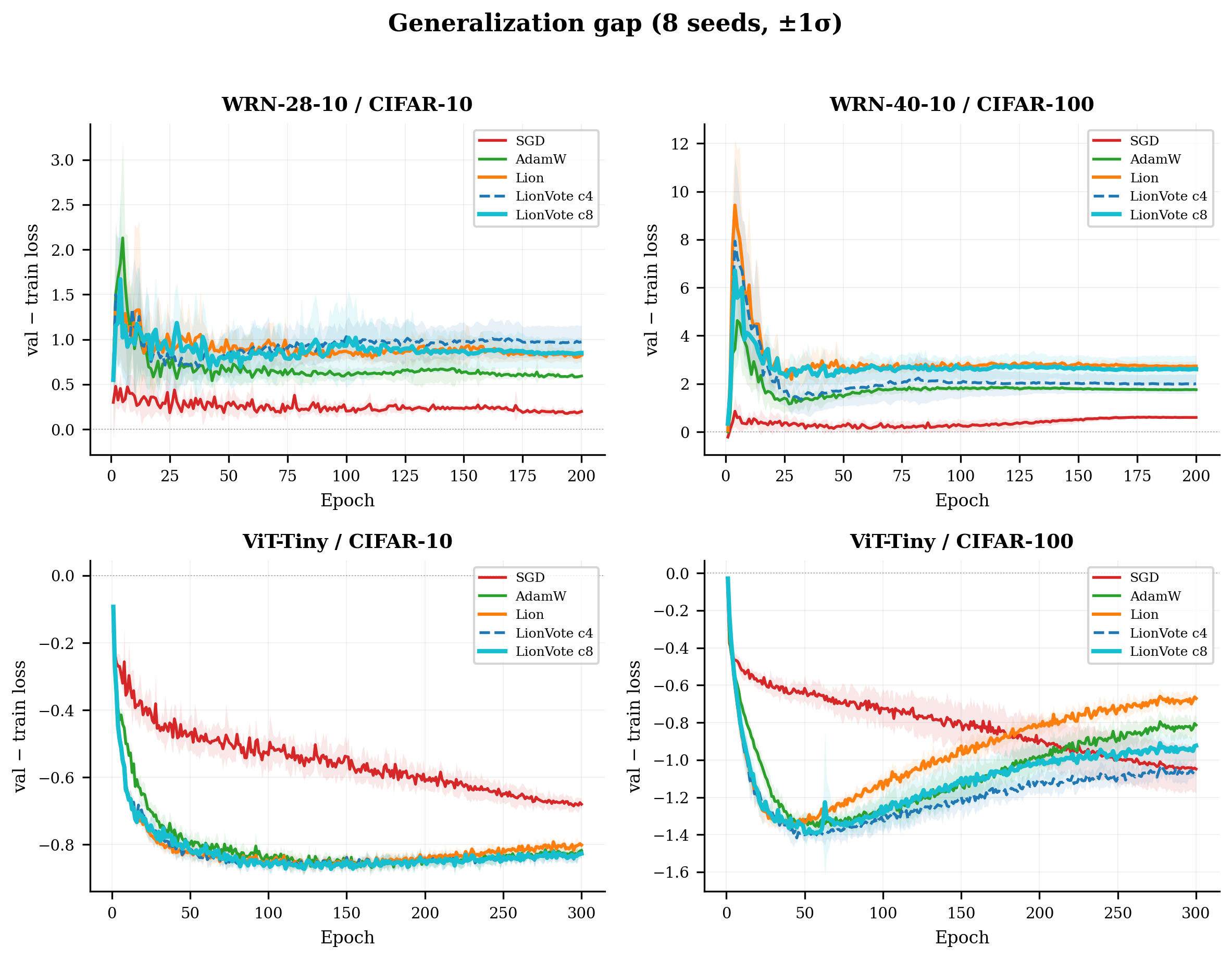}
\caption{Generalisation gap (validation loss $-$ training loss) for
all configurations (mean $\pm 1$ std, 8 seeds).  On WRN, SGD
maintains the smallest gap, indicating less overfitting.  On
ViT-Tiny, all methods show negative gaps (validation loss $<$
training loss), a consequence of Mixup, CutMix, and label smoothing
augmentations that make the training objective harder than the
clean validation objective.}
\label{fig:gen_gap}
\end{figure}

\subsection{Control Experiments: Static Multipliers and Lower Base Rate}
\label{app:exp:controls}

Four control baselines on ViT-Tiny/CIFAR-100 (8 seeds each, cosine
warmup, 300 epochs) test whether the per-layer differentiation
discovered by LionVote can be replicated by static means or by
adjusting global hyperparameters.

\paragraph{Lion-Typed.}  Lion with fixed per-layer-type LR multipliers
from epoch~1, using the exact epoch-300 values from
Table~\ref{tab:compound_vit} (cadence~4): attention ${\times}0.38$,
MLP ${\times}0.36$, normalisation ${\times}0.50$, embedding
${\times}0.40$.  Base $\mathrm{lr}{=}10^{-4}$, $\mathrm{wd}{=}0.5$.

\paragraph{Lion ($\mathbf{lr{=}4{\times}10^{-5}}$).}  Plain Lion at
${\sim}2.5{\times}$ lower base rate, matching LionVote's average
effective rate at convergence.  $\mathrm{wd}{=}0.5$.

\paragraph{Weight decay sweep.}  Lion at $\mathrm{wd}{=}0.25$
($0.5{\times}$ default) and $\mathrm{wd}{=}0.75$ ($1.5{\times}$
default), both with $\mathrm{lr}{=}10^{-4}$.

\medskip
All $p$-values are Welch two-sample $t$-tests ($n = 8$ per group).
\begin{center}
\small
\begin{tabular}{l c c c}
\toprule
Method & Accuracy (\%) & $\Delta$ vs.\ Lion & Welch $p$ vs.\ Lion \\
\midrule
LionVote (c8) & $69.71 \pm 0.60$ & $+0.76$ & $0.012$ \\
Lion ($\mathrm{wd}{=}0.25$) & $69.12 \pm 0.58$ & $+0.17$ & $0.52$ \\
Lion ($\mathrm{lr}{=}10^{-4}$) & $68.95 \pm 0.41$ & --- & --- \\
Lion ($\mathrm{wd}{=}0.75$) & $68.91 \pm 0.31$ & $-0.04$ & $0.84$ \\
Lion ($\mathrm{lr}{=}4{\times}10^{-5}$) & $68.72 \pm 0.45$ & $-0.23$ & $0.30$ \\
Lion-Typed & $68.47 \pm 0.34$ & $-0.48$ & $0.024$ \\
\bottomrule
\end{tabular}
\end{center}
\medskip

Lion-Typed is significantly worse than standard Lion ($-0.48$~pp,
$p = 0.024$) but not significantly worse than AdamW ($-0.28$~pp,
$p = 0.15$).
The lower-LR control is statistically indistinguishable from Lion
($p = 0.30$) and from AdamW ($p = 0.89$).  The two controls are
indistinguishable from each other ($p = 0.23$), and both are
significantly below LionVote at cadence~8 ($p = 0.003$ and
$p < 0.001$ respectively).
Neither weight decay variant significantly differs from the default:
$\mathrm{wd}{=}0.25$ is the best static variant but remains
$0.59$~pp below LionVote at cadence~8 ($p = 0.075$).

\paragraph{Per-seed accuracy.}

\medskip
\begin{center}
\small
\begin{tabular}{r c c c}
\toprule
Seed & Lion-Typed & Lion ($4{\times}10^{-5}$) & LionVote (c8) \\
\midrule
 2 & 68.60 & 68.81 & 70.44 \\
18 & 68.15 & 68.96 & 69.54 \\
20 & 68.01 & 68.09 & 69.29 \\
26 & 68.92 & 68.88 & 69.89 \\
40 & 68.84 & 69.38 & 68.95 \\
44 & 68.46 & 68.35 & 70.84 \\
48 & 68.66 & 69.06 & 69.26 \\
52 & 68.16 & 68.24 & 69.45 \\
\bottomrule
\end{tabular}
\end{center}
\medskip

No Lion-Typed seed exceeds $69\%$; one lower-LR seed (of eight)
exceeds $69\%$; seven of eight LionVote seeds exceed $69\%$.

\paragraph{Convergence speed.}

\medskip
\begin{center}
\small
\begin{tabular}{l c c c c c}
\toprule
Threshold & Lion ($4{\times}10^{-5}$) & Lion-Typed & Lion ($10^{-4}$) & LV c8 & AdamW \\
\midrule
$63\%$ &  68 &  71 &  85 &  76 & 121 \\
$65\%$ & 103 & 104 & 130 & 100 & 158 \\
$67\%$ & 187 & 188 & 199 & 133 & 214 \\
$68\%$ & 240 & 253 & 239 & 160 & 250 \\
$69\%$ & 2/8 & 0/8 & 3/8 & 7/8 & 3/8 \\
\bottomrule
\end{tabular}
\end{center}
\medskip

Both controls converge faster than Lion ($10^{-4}$) and AdamW to
early thresholds ($63$--$67\%$) but plateau lower.  LionVote at
cadence~8 reaches $68\%$ at epoch~$160$, a full $80$ epochs before
the lower-LR control.

\paragraph{Per-seed accuracy: ablation variants and WD sweep (ViT/C100, cadence~8).}

\medskip
\begin{center}
\small
\begin{tabular}{r c c c c c}
\toprule
Seed & v1only & notie & symmetric & Lion (wd=0.25) & Lion (wd=0.75) \\
\midrule
 2 & 68.86 & 68.54 & 70.26 & 69.43 & 69.15 \\
18 & 69.21 & 69.35 & 71.27 & 69.67 & 69.28 \\
20 & 68.03 & 68.17 & 70.43 & 68.35 & 68.93 \\
26 & 67.79 & 67.65 & 68.86 & 68.53 & 68.49 \\
40 & 69.11 & 68.94 & 70.69 & 70.02 & 69.22 \\
44 & 68.42 & 68.81 & 72.15 & 69.18 & 69.02 \\
48 & 68.55 & 68.81 & 70.49 & 69.08 & 68.54 \\
52 & 68.57 & 68.85 & 70.18 & 68.68 & 68.68 \\
\midrule
Mean & 68.57 & 68.64 & 70.54 & 69.12 & 68.91 \\
Std  &  0.49 &  0.52 &  0.94 &  0.58 &  0.31 \\
\bottomrule
\end{tabular}
\end{center}
\medskip

\subsection{Raw Lion Gradient Diagnostics (No Voting Mechanism)}
\label{app:exp:raw_diagnostics}

To test whether the layer-type differentiation observed in
\S\ref{sec:analysis:layers} is an artefact of LionVote's specific
voting mechanism, the same per-layer diagnostics were measured during
standard Lion training on ViT-Tiny/CIFAR-100 (seed~2, cadence~8
measurement interval, cosine warmup, 300 epochs).  No voting, no
compound levels, no per-layer LR adjustment---only the raw diagnostic
signals.

\paragraph{Gradient direction stability.}
Table~\ref{tab:raw_diag} reports the mean epoch-mean cosine alignment
$c_i = \cos(\bar{g}_i^{(\mathrm{curr})}, \bar{g}_i^{(\mathrm{prev})})$
by layer type in late training (epoch $\geq 200$).

\begin{table}[H]
\centering
\small
\caption{Raw per-layer gradient diagnostics during standard Lion
training (ViT-Tiny/CIFAR-100, seed~2, epoch~$\geq 200$).
``Would-vote'' columns show the fraction of measurement epochs
at which the raw cosine alignment would trigger a Vote~1 outcome
under the paper's thresholds.}
\label{tab:raw_diag}
\begin{tabular}{l c c c c c}
\toprule
Layer type & Mean cosine & Std & Would vote $-1$ & Abstain & Would vote $+1$ \\
\midrule
attn & $-0.005$ & $0.106$ & $56\%$ & $43\%$ & $<1\%$ \\
mlp  & $-0.002$ & $0.054$ & $56\%$ & $44\%$ & $0\%$ \\
norm & $+0.104$ & $0.192$ & $30\%$ & $69\%$ & $1\%$ \\
\bottomrule
\end{tabular}
\end{table}

Under standard Lion, attention and MLP layers have mean cosine
alignment near zero: their epoch-mean gradients are essentially
uncorrelated across consecutive measurement windows.  Normalisation
layers have mean cosine $+0.104$---substantially higher, indicating
more directionally stable gradients.  Attention and MLP parameters
would receive a negative vote (cosine $< 0$) on $56\%$ of measurement
epochs, compared to $30\%$ for normalisation---a $1.9{\times}$ ratio.
This asymmetry in raw gradient statistics is the signal that LionVote's
Vote~1 detects; it exists in standard Lion training independent of the
voting mechanism.

\paragraph{Momentum health.}
The momentum-to-gradient norm ratio $r = \|m\| / \|\bar{g}\|$ stays
within the Vote~2 dead zone $[1/e, e]$ for $98$--$100\%$ of
measurements across all layer types, confirming that Vote~2's low
firing rate (\S\ref{sec:experiments:ablation}) reflects the training
dynamics, not the threshold placement.

\paragraph{Cross-optimizer comparison (Lion vs.\ AdamW).}
Running the same diagnostics under AdamW (ViT-Tiny/CIFAR-100,
cosine warmup, lr${=}10^{-3}$, wd${=}0.05$, single seed) reveals that
the attn~$<$~norm ordering is optimizer-general, but Lion amplifies the
gap.

\begin{table}[H]
\centering
\small
\caption{Cross-optimizer gradient diagnostics (ViT-Tiny/CIFAR-100,
seed~2, late training epoch~$\geq 200$, 13 measurement epochs
$\times$ 152 parameters).
Negative-vote rate: fraction of (param, epoch) pairs with
cosine~$<~0$.}
\label{tab:cross_opt_diag}
\begin{tabular}{l c c c c}
\toprule
& \multicolumn{2}{c}{Mean cosine $\pm$ std} & \multicolumn{2}{c}{Negative-vote rate} \\
\cmidrule(lr){2-3}\cmidrule(lr){4-5}
Layer type & Lion & AdamW & Lion & AdamW \\
\midrule
attn  & $-0.005 \pm 0.106$ & $-0.005 \pm 0.097$ & $56.4\%$ & $56.1\%$ \\
mlp   & $-0.002 \pm 0.054$ & $+0.001 \pm 0.056$ & $56.4\%$ & $54.3\%$ \\
norm  & $+0.104 \pm 0.192$ & $+0.053 \pm 0.161$ & $29.8\%$ & $37.5\%$ \\
other & $+0.081 \pm 0.216$ & $+0.027 \pm 0.183$ & $30.8\%$ & $55.8\%$ \\
\midrule
norm--attn gap & $0.109$ & $0.058$ & & \\
\bottomrule
\end{tabular}
\end{table}

The attn~$<$~norm ordering holds for both optimizers ($100\%$ of late
epochs under Lion, $92\%$ under AdamW), confirming the heterogeneity
is architectural.  Lion's $\operatorname{sign}(\cdot)$ amplifies the
norm--attn cosine gap $1.88{\times}$ relative to AdamW ($0.109$
vs.\ $0.058$): because sign discards gradient magnitude, directionally
unstable layers cannot compensate through large gradients.  AdamW's
second moment ($1/\sqrt{\hat{v}}$) implicitly down-weights noisy
coordinates, partially absorbing the layer-type difference.  This
explains why per-layer correction has the largest marginal value for
sign-based optimizers: no within-update self-correction mechanism
exists.

\paragraph{Vote 2 under AdamW: thresholds are Lion-specific.}
The cosine-alignment comparison above concerns only Vote~1's input
signal.  The analogous comparison for Vote~2 -- the
momentum-to-gradient norm ratio $r = \|m\|/\|\bar{g}\|$ -- reveals a
qualitative difference.  Under Lion,
$r$~has geometric mean ${\sim}1.8$ across layer types and sits
within the dead zone $[1/e, e]$ for ${\sim}97.6\%$ of observations
(Table~\ref{tab:cross_opt_v2}).  Under AdamW with the same
parameters monitored, $r$~has geometric mean ${\sim}8$--$10$ and lies
above~$e$ for ${\sim}100\%$ of observations.

\begin{table}[H]
\centering
\small
\caption{Vote~2 input statistics under Lion vs.\ AdamW
(ViT-Tiny/CIFAR-100, seed~2, all $36$ voting epochs,
$n = 5472$ per optimizer).  Geometric mean
$r = \|m\|/\|\bar{g}\|$ is reported per layer type; overall
``$r > e$'' is the $V_2 = -1$ firing rate.}
\label{tab:cross_opt_v2}
\begin{tabular}{l r r r r}
\toprule
 & \multicolumn{2}{c}{Geo.\ mean $r$ by layer} &
    \multicolumn{2}{c}{Overall $V_2$} \\
\cmidrule(lr){2-3}\cmidrule(lr){4-5}
Layer / Stat & Lion & AdamW & Lion & AdamW \\
\midrule
attn   & $1.898$ & $9.384$ & \multirow{4}{*}{---} & \multirow{4}{*}{---}  \\
mlp    & $1.790$ & $8.022$ & & \\
norm   & $1.769$ & $8.818$ & & \\
other  & $1.914$ & $9.930$ & & \\
\midrule
$V_2 = +1$ rate & $0.00\%$ & $0.00\%$ & & \\
$V_2 = 0$ rate  & $97.57\%$ & $0.02\%$ & & \\
$V_2 = -1$ rate & $2.43\%$ & $99.98\%$ & & \\
\bottomrule
\end{tabular}
\end{table}

The ratio difference is a consequence of the different momentum
mechanics.  Lion's $\beta_2$ outer EMA
($\beta_2 = 0.99$ default) smooths the update buffer over roughly
$100$~steps, keeping its norm comparable to the current gradient.
AdamW's first-moment EMA with $\beta_1 = 0.9$ accumulates raw
gradients on a much longer effective horizon when unnormalised by
the second moment (which is applied separately by division inside
the update itself rather than to the buffer we monitor), so the
buffer norm significantly exceeds the current-gradient norm on
this workload.  Consequently, Vote~2's symmetric $[1/e, e]$ dead
zone -- which is derived from the EMA time-constant identity
$\beta_2^{\tau} = e^{-1}$ -- is calibrated against Lion's
$\beta_2$-EMA and produces a near-constant $V_2 = -1$ signal under
AdamW, making it useless for per-layer modulation without
recalibration.

\paragraph{Optimizer-agnostic versus Lion-specific components.}
Combining the Vote~1 comparison (near-invariant between Lion and
AdamW: $V_1 = -1$~rate $37.6\%$ vs.\ $35.4\%$; $V_1 = +1$~rate
$2.5\%$ vs.\ $1.1\%$) with the Vote~2 comparison
(optimizer-dependent: dead-zone occupancy $97.6\%$ vs.\ $0.02\%$),
the LionVote mechanism decomposes as follows:

\begin{table}[H]
\centering
\small
\caption{LionVote component portability to other momentum-based
optimizers, based on the Lion/AdamW diagnostic comparison
(this subsection and
Table~\ref{tab:cross_opt_diag}).}
\label{tab:portability}
\begin{tabular}{l p{0.60\textwidth}}
\toprule
Component & Portability notes \\
\midrule
$V_1$ input (cosine of epoch-mean gradients)
  & Optimizer-agnostic: gradient-only statistic. \\
$V_1$ upper threshold $0.5$
  & Sheppard derivation (App.~\ref{app:vote1}) uses the sign-update
    structure; the $2/3$ convention is optimizer-agnostic. \\
$V_1$ lower threshold $0$
  & Optimizer-agnostic: parameter-free geometric identity. \\
$V_2$ input $\|m\|/\|\bar{g}\|$
  & Optimizer-agnostic: computable from any momentum-based optimizer. \\
$V_2$ thresholds $1/e$, $e$
  & Lion-specific: calibrated to Lion's $\beta_2$-EMA time constant;
    misfire under AdamW's different momentum structure. \\
Tiebreaker (validation $\pm 1\%$)
  & Optimizer-agnostic: loss-based. \\
LR exponent numerator $\beta_1$
  & Lion-specific form: Lion's $\beta_1$ is the momentum-gradient
    interpolation coefficient; other optimizers use $\beta_1$ differently. \\
LR exponent divisor $2$
  & Derived from the sign-based descent lemma (App.~\ref{app:exponent});
    the principle transfers but the exact coefficient requires
    per-optimizer re-derivation. \\
Compound-level integer state machine
  & Optimizer-agnostic: generic over votes (Algorithm~\ref{alg:lionvote}). \\
Cadence~$c$, max level~$L$
  & Optimizer-agnostic: generic hyperparameters. \\
\bottomrule
\end{tabular}
\end{table}

Two components are essentially Lion-specific: Vote~2's thresholds
(empirically demonstrated above) and the LR exponent's
numerator~$\beta_1$ (by Lion's specific use of~$\beta_1$ as the
momentum-gradient interpolation coefficient).  Three components are
optimizer-agnostic in input and require only Lion's sign-update
structure for one of the two derivations on their thresholds:
Vote~1, the tiebreaker, and the compound-level state machine.  The
divisor~$2$ principle transfers but the coefficient may not.  A
port to another momentum-based optimizer would therefore preserve
the architecture of the voting mechanism, recalibrate Vote~2 using
that optimizer's effective momentum time constant, and re-examine
the LR exponent.  We do not carry out such a port; this is left to
future work.

\paragraph{Single-seed limitation.}
These measurements (Vote~1, Vote~2, and Lion/AdamW cross-optimizer
comparisons above) are from a single seed.  The layer-type ordering
(norm more stable than attn/mlp) is consistent with the 8-seed
compound level trajectories in Table~\ref{tab:compound_vit} and the
structural argument in \S\ref{sec:analysis:layers}.  The Vote~2
dead-zone discrepancy between Lion and AdamW reflects a
single-seed observation of an effect
(${\sim}5{\times}$ ratio difference) whose magnitude far exceeds
plausible seed-to-seed variation and is therefore unlikely to reverse
under replication, though confirming this is left to future work.

\subsection{Partial Schedule Replacement}
\label{app:exp:schedule_replacement}

LionVote without any global schedule outperforms the
scheduled variant on WRN/C10 at cadence~8 by $0.18$~pp ($93.48\%$
vs.\ $93.30\%$), though this difference is not statistically
significant given the standard deviations
($\pm 0.39$ vs.\ $\pm 0.82$).  It falls short by $0.82$~pp on
ViT/C100 ($68.89\%$ vs.\ $69.71\%$;
Table~\ref{tab:ablation}).  The universally negative compound levels
implement a form of learning rate decay that overlaps in function with
cosine annealing.  On architectures where this overlap is
sufficient (WRN), the schedule becomes unnecessary; on ViTs, the
schedule provides additional value.

\subsection{Computational Cost Details}
\label{app:exp:computational_cost}

LionVote adds one in-place gradient accumulation per parameter per batch and
linear voting cost (one cosine similarity, one norm ratio, one integer
comparison per parameter) every $c$ epochs.  No second-order
information, meta-gradients, or additional forward/backward passes are
required.  Memory overhead beyond Lion is two tensors per parameter
(the gradient accumulator and the previous epoch-mean gradient
snapshot) plus two scalars (the compound level and the effective
learning rate).  The per-batch accumulation is a single in-place
addition per parameter (the same operation count as one line of Lion's
momentum update) and is negligible relative to the forward/backward
pass.  Voting fires once every $c$ epochs; for ViT-Tiny ($152$
parameter tensors), this is $152$ dot products and norms.  The
dominant overhead is memory: the two extra tensors triple Lion's
per-parameter optimizer state (${\sim}23$\,MB for Lion,
${\sim}69$\,MB for LionVote on ViT-Tiny).  For billion-parameter
models the $3{\times}$ increase is a deployment concern.

\paragraph{Hardware and total compute.}
All experiments were run on a single NVIDIA GPU (A100, T4, or L40S
depending on availability).  The 605 runs reported in this paper
required ${\sim}901$ GPU-hours in total:

\medskip
\begin{center}
\small
\begin{tabular}{l r r r}
\toprule
Configuration group & Runs & Total (hrs) & Avg per run (min) \\
\midrule
WRN-28-10 / CIFAR-10  & 285 & 417 & 88 \\
WRN-40-10 / CIFAR-100 &  88 & 176 & 120 \\
ViT-Tiny / CIFAR-10   &  88 &  72 & 49 \\
ViT-Tiny / CIFAR-100  & 112 & 146 & 78 \\
Controls (ViT/C100)    &  32 &  90 & 168 \\
\midrule
\textbf{Total}         & \textbf{605} & \textbf{901} & --- \\
\bottomrule
\end{tabular}
\end{center}
\medskip

No preliminary or failed experiments beyond those reported were
conducted.

\end{document}